\newtheorem{lemma}{Lemma}
\newtheorem{proposition}{Proposition}
\newtheorem{definition}{Definition} 
\title{Dual-Kernel Graph Community Contrastive Learning}
\author{
    Xiang Chen\textsuperscript{\rm 1,\rm 2}, 
    Kun Yue\textsuperscript{\rm 1,\rm 2}, 
    Wenjie Liu\textsuperscript{\rm 1,\rm 2}, 
    Zhenyu Zhang\textsuperscript{\rm 3}, 
    Liang Duan\textsuperscript{\rm 1,\rm 2}\thanks{Corresponding author.}
}
\begin{document}

\maketitle

\begin{abstract}
    Graph Contrastive Learning (GCL) has emerged as a powerful paradigm for training Graph Neural Networks (GNNs) in the absence of task-specific labels. However, its scalability on large-scale graphs is hindered by the intensive message passing mechanism of GNN and the quadratic computational complexity of contrastive loss over positive and negative node pairs. To address these issues, we propose an efficient GCL framework that transforms the input graph into a compact network of interconnected node sets while preserving structural information across communities. We firstly introduce a kernelized graph community contrastive loss with linear complexity, enabling effective information transfer among node sets to capture hierarchical structural information of the graph. We then incorporate a knowledge distillation technique into the decoupled GNN architecture to accelerate inference while maintaining strong generalization performance. Extensive experiments on sixteen real-world datasets of varying scales demonstrate that our method outperforms state-of-the-art GCL baselines in both effectiveness and scalability.
\end{abstract}

\begin{links}
    \link{Code}{https://github.com/chenx-hi/DKGCCL}
\end{links}

\section{Introduction}
Graph neural networks (GNNs) learn effective node representations through message passing over graph structures, achieving impressive success across a wide range of graph analysis tasks~\cite{gnn-survey}. However, most GNN models are trained in a supervised way, and their performance is heavily dependent on the availability of labeled data. To address this limitation, graph contrastive learning (GCL) has emerged as a promising self-supervised approach for graph representation learning~\cite{dgi}. The core idea of GCL is to distinguish positive and negative node pairs using a contrastive loss grounded in mutual information maximization. This enables label-free training of GNN, achieving performance comparable to, or even surpassing, that of supervised methods~\cite{gcl-survey}.


Despite significant progress in GCL, its application to large-scale graphs remains challenging due to two fundamental bottlenecks. The first is that \textit{training scalability is limited by the quadratic computational complexity of pairwise comparisons in both GNN and contrastive loss}. The second is that \textit{inference inefficiency stems from the intensive message-passing mechanism inherent in GNN architectures}. Most existing methods address these challenges in isolation, lacking a unified framework that improves both scalability and efficiency. For instance, some methods simplify training by eliminating the need for negative sampling~\cite{e2neg} or reducing the number of augmented views processed by GNN~\cite{sugrl}, but they do not improve inference efficiency. Conversely, other methods achieve fast inference through decoupled GNN-MLP architectures~\cite{graphecl}, yet still incur high computational training cost due to the contrastive loss.

Recent advancements in graph representation learning have extended beyond the limitations of traditional node-level message passing to reduce the complexity of GNN during training. These methods typically partition the graph into multiple communities, treating each as a single node to yield a coarsened graph. Subsequently, message passing~\cite{cluster-gcn} or Graph Transformer attention~\cite{smooth} is applied to the coarsened graph, enabling the model to capture long-range dependencies and significantly reduce computational complexity. Notably, this graph coarsening technique has been successfully applied to GCL~\cite{structcomp}, where it simultaneously addresses scalability issues arising from both GNN and contrastive loss in training phase. However, such oversimplification may lead to excessively uniform node representations within communities, resulting in a loss of fine-grained node information.

In this work, instead of simplifying each community to a single node via coarsening techniques, we envision the input graph as a network of node sets interconnected across communities, which allows us to preserve essential node information during training~\cite{n2c}. To mitigate the increased complexity arising from this design choice, we integrate Multiple Kernel Learning (MKL)~\cite{mkl-tkde, mkl-federated} into the graph contrastive loss. By combining node-level and community-level kernels of different granularities, our proposed Dual-Kernel \underline{G}raph \underline{C}ommunity \underline{C}ontrastive \underline{L}earning (GCCL) effectively captures the hierarchical structural information of the graph~\cite{ais}. In the GCCL training process, we forgo explicit message passing and reduce the computational complexity of the contrastive loss from quadratic to linear time, thereby directly addressing the scalability challenge.


To improve inference efficiency on large-scale graphs while guaranteeing model performance, we propose a knowledge distillation module based on a decoupled GNN architecture. Specifically, we decouple the feature transformation and message passing steps of the GNN. The linear layer is trained within GCCL, while during inference, a parameter-free message passing operation is employed to propagate structural information  across the graph. This decoupled paradigm introduces no additional training overhead  on GCCL and improves generalization for downstream tasks. We then use the post-message-passing representations from the decoupled GNN as the distillation target,  which enables us to extract a lightweight MLP model that captures graph structural information from the decoupled GNN, making our method suitable for latency-critical applications.

Our main contributions are summarized as follows:
\begin{itemize}
    \item We propose a dual-kernel graph community contrastive loss by integrating multiple kernel learning, which improves the scalability of GCL training.
    
    \item We introduce a knowledge distillation module for decoupled GNN to effectively preserve graph structure information and enable low-latency inference.
    
    \item We provide theoretical analyses demonstrating that the kernelized graph community contrastive loss yields high-quality node representations for downstream tasks.
      
    \item Extensive experiments show that our method achieves state-of-the-art performance while significantly reducing the computational costs of both training and inference.
\end{itemize}

\section{Preliminaries}
\subsubsection{Graph Neural Network.}
Let $\mathcal{G} = (\mathcal{V}, \mathcal{E}, \mathbf{X})$ denote an undirected graph, where $\mathcal{V} = \{v_1, \cdots, v_n\}$ is the set of $n$ nodes, $\mathcal{E} \subseteq \mathcal{V} \times \mathcal{V} $ is the set of edges, and $\mathbf{X} \in \mathbb{R}^{n \times h}$ is the node feature matrix. The $i$-th row of $\mathbf{X}$ corresponds to the $h$-dimensional feature vector $\mathbf{x}_i$ of node $v_i$. The graph structure can be denoted by an adjacency matrix $\mathbf{A} \in \{0, 1\}^{n \times n}$, where $\mathbf{A}_{i,j} = 1$ if and only if $(v_i,v_j) \in \mathcal{E}$. For simplicity, the undirected graph $\mathcal{G}$ can also be denoted as $G=(\mathbf{A}, \mathbf{X})$. 
Given $G$ as input, the GNN encoder $f_{\theta}(G):G \to \mathbb{R}^{ n \times d}$ can produce effective representations $\mathbf{z}_i = f_{\theta}(G)[v_i]$ of node $ v_i$:
\begin{equation} \label{equ:gnn}
	f_{\theta}(G) = \sigma(\widetilde{\mathbf{D}}^{-\frac{1}{2}} (\mathbf{A} + \mathbf{I}) \widetilde{\mathbf{D}}^{-\frac{1}{2}} \mathbf{X}\mathbf{W})
\end{equation}
where $\mathbf{I}$ is an identity matrix, $\widetilde{\mathbf{D}}$ is a diagonal degree matrix of $\mathbf{A} + \mathbf{I}$, $\sigma(\cdot)$ is a non-linear activation function, and $\mathbf{W} \in \mathbb{R}^{h \times d}$ is a learnable parameter matrix corresponding to $\theta$. 

\subsubsection{Graph Community Contrastive Learning.} A typical GCL paradigm defines adjacent nodes as positive pairs and all other nodes in $G$ as negative pairs to ensure the adjacent nodes have similar representations~\cite{ncla}:
\begin{equation} \label{equ:gcl}
    \mathcal{L}_{G} = - \frac{1}{n} \sum_{v_i \in \mathcal{V}} \frac{1}{\mathcal{N}(v_i)} \sum_{v_j \in \mathcal{N}(v_i)} \log {\frac{\exp{(\mathbf{z}_i^\mathrm{T}\mathbf{z}_j/\tau)}}{\sum_{v_k \in v_i^-} \exp(\mathbf{z}_i^\mathrm{T}\mathbf{z}_k/\tau)}}
\end{equation}
where $\mathcal{N}(v_i)$ is the neighborhoods of node $v_i$, $v_i^-$ is the negative node set and $\tau$ is the temperature hyper-parameter.
 
Let $\mathcal{P} = \{P_1, \cdots, P_m\}$ be a partition of $G$ with $m$ communities. Each community $P_j \in \mathcal{P}$ is a subset of $\mathcal{V}$, such that $\mathcal{V}=\bigcup_{j=1}^m P_j$ and $P_j \cap P_k = \emptyset$ for $j \neq k$. The partition assignment matrix denotes as $\mathbf{P} \in \mathbb{R}^{n \times m}$, where $\mathbf{P}_{i,j}$ is the weight of $i$-th node in the $j$-th community. The community-wise (coarsened) graph $\mathbf{A}^\mathcal{P}$ can be constructed by $\mathbf{P}^\mathrm{T} \mathbf{A} \mathbf{P}$, where $\mathbf{A}^\mathcal{P}_{j,k}$ is the connection weight between $P_j$ and $P_k$. 

In this work, we focus on leveraging community structure information to reconstruct the graph contrastive loss in Eq.~\ref{equ:gcl}, and refer to the resulting objective as the graph community contrastive loss (GCCL loss).

\subsubsection{Multiple Kernel Learning.} Kernel-based methods utilize an optimal kernel function $\kappa(\cdot,\cdot): \mathbb{R}^d \times \mathbb{R}^d \to \mathbb{R}$ to measure pairwise similarity and have proven powerful for diverse tasks~\cite{mkl-tkde}. MKL methods integrate diverse features from different perspectives by combining multiple kernel functions~\cite{mkl-cluster}. The resulting multiple kernel $\kappa_\eta$ is defined as:
\begin{equation} \label{equ:mkl}
    \kappa_\eta(\{\mathbf{z}_i\}^l_{i=1}, \{\mathbf{y}_i\}^l_{i=1}) = g_\eta(\{ \kappa_i(\mathbf{z}_i, \mathbf{y}_i) \}^l_{i=1}) 
\end{equation}
where $g_\eta: \mathbb{R}^l \to \mathbb{R}$ is a combinatorial function. In this work, we focus on the pairwise scenario, i.e., $l=2$. Specially, given two kernels $\kappa_1: \mathcal{X} \times \mathcal{X} \to \mathbb{R}$ and $\kappa_2: \mathcal{X}^\prime \times \mathcal{X}^\prime \to \mathbb{R}$, we consider two strategies of $g_\eta$: the tensor product of kernels and the convex linear of kernels~\cite{mkl-two}. 
The tensor product method is defined as:
\begin{equation} \label{equ:mkl-dot}
    \kappa_\eta((\mathbf{z},\mathbf{z}^\prime), (\mathbf{y},\mathbf{y}^\prime)) = \kappa_1(\mathbf{z},\mathbf{y}) \cdot \kappa_2(\mathbf{z}^\prime, \mathbf{y}^\prime)
\end{equation}
where $(\mathbf{z},\mathbf{z}^\prime), (\mathbf{y},\mathbf{y}^\prime)$ are pairs of objects from feature space $\mathcal{X} \times \mathcal{X}^\prime$. The convex linear combination is defined as:
\begin{equation} \label{equ:mkl-combination}
    \kappa_\eta((\mathbf{z},\mathbf{z}^\prime), (\mathbf{y},\mathbf{y}^\prime)) = \alpha\kappa_1(\mathbf{z},\mathbf{y}) + (1-\alpha) \kappa_2(\mathbf{z}^\prime, \mathbf{y}^\prime)
\end{equation}
where $\alpha \in [0, 1]$ is a combination coefficient.

\begin{figure*}[t]
\centering
\includegraphics[width=0.99\textwidth]{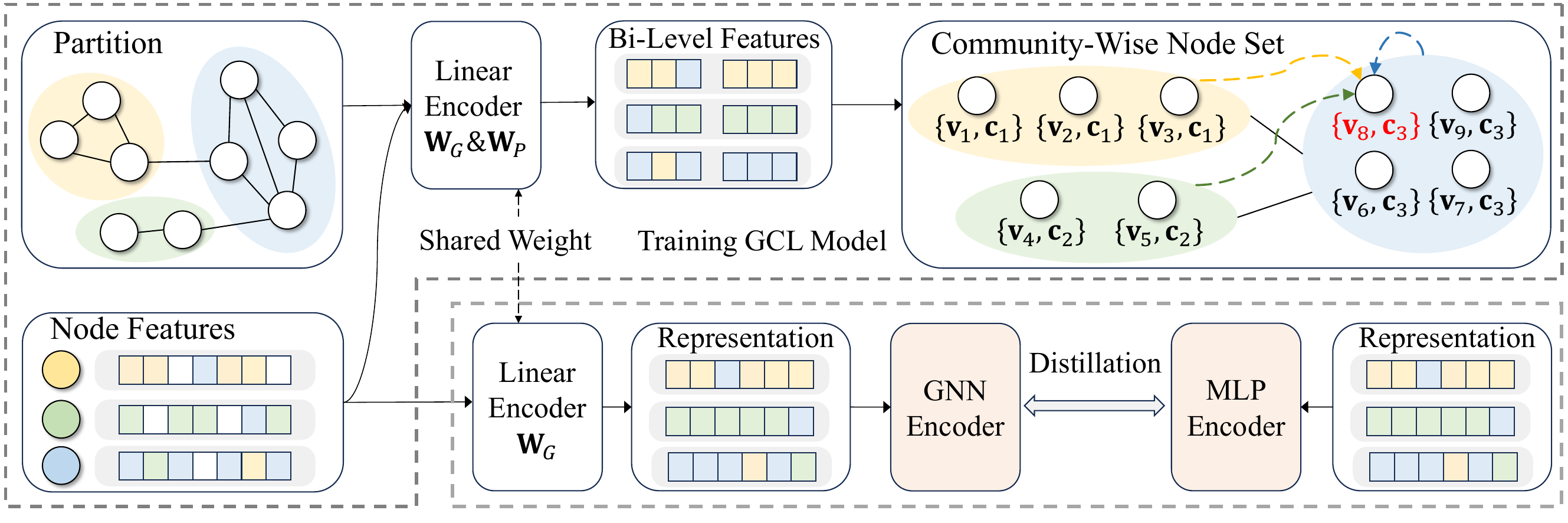}
\caption{The overall framework of our method.}
\label{fig-model}
\end{figure*}

\section{Methodology}
In this section, we present the proposed GCCL framework, illustrated in Figure~\ref{fig-model}. We first reduce the complexity of contrastive loss by leveraging community structure and MKL. We then design a knowledge distillation module for decoupled GNN to speedup inference.



\subsection{Kernel-Based GCCL}
\subsubsection{Bi-Level Features Generation.}~
The fundamental concept behind our method is to perceive $G$ as interconnected communities of nodes~\cite{n2c}. Within this paradigm, the community to which a node set belongs can serve as a bridge for information interaction, enhancing the information flow between its internal nodes and nodes in other communities. This enables our method to effectively capture the hierarchical information of the graph while preserving node-level details. Thus, node $v_i$ in $P_j$ can be characterized by a bi-level pair of features $\{\mathbf{v}_i, \mathbf{c}_j\} \in \mathcal{X}_G \times \mathcal{X}_P$:
\begin{equation}  \label{equ:bi-level}
    \mathbf{v}_i = \mathbf{x}_i\mathbf{W}_G,~\mathbf{c}_j= \sum\nolimits_{v_t \in P_j} \mathbf{P}^{\mathrm{T}}_{t,j} \text{Dropout}\left(\mathbf{x}_t\mathbf{W}_P\right)
\end{equation}
where $\mathbf{v}_i \in \mathcal{X}_G$ is a node-level feature and $\mathbf{c}_j \in \mathcal{X}_P$ is a community-level feature. $\mathbf{W}_G$ and $\mathbf{W}_P$ are two different projection matrices to the node feature space $\mathcal{X}_G$ and the community feature space $\mathcal{X}_P$, respectively. 

Note that we apply a random mask to all dimensions of $\mathbf{x}_t\mathbf{W}_P$ per training epoch to obtain $\mathbf{c}_j$. This method serves as a special data augmentation strategy, providing more diverse community-level features in $\mathcal{X}_P$ for community structure-based GCL. Specifically, we regard the construction of community-level features as a message passing process from nodes to community centroids. Based on the findings of Dropout($\cdot$) in the message passing mechanism of GNN~\cite{dropmessage,dropout-gnn}, we can derive the following proposition.
\begin{proposition} \label{proposition:drop-p}
    Let the feature dimension of the community-level feature space $\mathcal{X}_P$ be $d^P$. Then, the expected number of distinct partitioned substructures generated by the Dropout($\cdot$) operation for each partition $P_j$ is:
    \begin{equation}
        \mathbb{E}[|P^s_{j}|s=1,\cdots,d^P|] = d^P\left( 1-(1-p)^{|P_j|}\right)
    \end{equation}
    where $P^s_{j}$ is a substructure of $P_j$ on the feature dimension $s$, and $p$ is the dropout probability.
\end{proposition}

Proposition~\ref{proposition:drop-p} demonstrates that Dropout($\cdot$) generates a set of substructures~\cite{dropout-gnn}, whose quantity increases with both the dropout probability $p$ and the dimension $d^P$. In subsequent experiments, we found that the diversity of substructures can reduce the training cycle of GCL. In addition, we will discuss the differences between Dropout($\cdot$) and other augmentation strategies of community-based GCL in Appendix C.1.

\subsubsection{Dual-Kernel GCCL Loss.}
After obtaining the bi-level features, we consider how to use a simple kernel trick to accelerate the computational process of GCL. The success of existing GCL methods lies in emphasizing the neighborhood similarity of node representations~\cite{ncla}, a phenomenon also observed in coarsened graphs~\cite{structcomp}, which aligns with the graph homophily assumption. This motivates us to treat the target node $v_i$ and its interconnected node communities as positive pairs.
\begin{definition}
Given a bi-level kernel $\kappa_B:\mathcal{X}_G \times \mathcal{X}_P \to \mathbb{R}_+$, the graph community contrastive loss can be expressed as:
\begin{equation*}
    \mathcal{L}_\mathcal{P} = -\frac{1}{n} \sum\nolimits_{v_i \in \mathcal{V}} \log  \ell(v_i, P_j) \text{,}~~~~\text{where}~\ell(v_i, P_j) =   
\end{equation*}
\begin{equation*}
     \frac{\sum_{P_k \in \mathcal{N}(P_j)} \mathbf{A}^\mathcal{P}_{j,k} \sum_{v_t \in P_k} \mathbf{P}_{t,k} \cdot \kappa_{B}( \{\mathbf{v}_i, \mathbf{c}_j\}, \{\mathbf{v}_t, \mathbf{c}_k \} ) }{ \sum_{P_k \in \mathcal{P}} \sum_{v_t \in P_k} \mathbf{P}_{t,k} \cdot \kappa_B(\{\mathbf{v}_i, \mathbf{c}_j\}, \{\mathbf{v}_t, \mathbf{c}_k \}) } 
\end{equation*}
where $P_j$ is the community to which $v_i$ belongs, and $\mathcal{N}(P_j)$ is the set of communities connected to $P_j$ in $\mathbf{A}^\mathcal{P}$.
\end{definition}
 In this definition, $\kappa_B$ helps integrate information between nodes and communities, while $\mathbf{A}^\mathcal{P}_{j,k}$ can adjust the weight of positive pair based on the connectivity between $P_j$ and $P_k$.

 Let the non-negative kernel functions of feature spaces $\mathcal{X}_G$ and $\mathcal{X}_P$ be $\kappa_G$ and $\kappa_P$, respectively, and we represent $\kappa_G$ via its feature map as $\kappa_G(\mathbf{v}_i, \mathbf{v}_t) \thickapprox \phi(\mathbf{v}_i)^\mathrm{T}\phi(\mathbf{v}_t)$. According to the tensor product of kernels (Eq.\ref{equ:mkl-dot}), we have:
    \begin{equation*}
        \kappa_B(\{\mathbf{v}_i, \mathbf{c}_j\}, \{\mathbf{v}_t, \mathbf{c}_k \}) = \kappa_P(\mathbf{c}_j, \mathbf{c}_k)\cdot\phi(\mathbf{v}_i)^\mathrm{T}\phi(\mathbf{v}_t)
    \end{equation*}
Then, we can derive the variant of $\ell(v_i, P_j)$ as $\ell_{tp}(v_i, P_j)$.
\begin{definition}
    The dual-kernel GCCL loss with tensor product method $\ell_{tp}(v_i, P_j)$ can be formulated as:
    \begin{equation} \label{eq:tp}
          \frac{\phi(\mathbf{v}_i)^\mathrm{T} \left[\sum_{P_k \in \mathcal{N}(P_j)} \kappa_P^\prime(\mathbf{c}_j, \mathbf{c}_k) \sum_{v_t \in P_k} \phi^\prime(\mathbf{v}_t)\right]}{ \phi(\mathbf{v}_i)^\mathrm{T} \left[ \sum_{P_k \in \mathcal{P}} \kappa_P(\mathbf{c}_j, \mathbf{c}_k) \sum_{v_t \in P_k} \phi^\prime(\mathbf{v}_t) \right] }
    \end{equation}
    where the valid kernel $\kappa_P^\prime(\mathbf{c}_j, \mathbf{c}_k) = \mathbf{A}_{j,k}^\mathcal{P} \cdot \kappa_P(\mathbf{c}_j, \mathbf{c}_k)$ and the feature map $\phi^\prime(\mathbf{v}_t) = \mathbf{P}_{t,k} \cdot \phi(\mathbf{v}_t)$.
\end{definition}

The tensor product method of MKL enables interactions across all dimensions of features at different granularity levels~\cite{mkl-two}, which naturally allows us to effectively capture the dependencies between node-level and community-level features in the combined feature space $\mathcal{X}_G \times \mathcal{X}_P$. Another key advantage of $\ell_{tp}(v_i, P_j)$ is that the summation term of negative pairs in the denominator is shared across all nodes, so it only needs to be calculated once and can be re-used for other nodes. The summation term of positive pairs in the numerator is shared among nodes within the same community. These properties avoid the quadratic computational complexity of node pairs in vanilla contrastive loss.

Next, we discuss another variant of $\ell(v_i, P_j)$. According to the linear combination of kernels (Eq.\ref{equ:mkl-combination}), we have:
\begin{equation*}
    \kappa_B(\{\mathbf{v}_i, \mathbf{c}_j\}, \{\mathbf{v}_t, \mathbf{c}_k \}) = \alpha \phi(\mathbf{v}_i)^\mathrm{T}\phi(\mathbf{v}_t) + (1-\alpha) \kappa_P(\mathbf{c}_j, \mathbf{c}_k)
\end{equation*}
Then, we can derive the variant of $\ell(v_i,P_j)$ as $\ell_{lc}(v_i,P_j)$.

\begin{definition}
    The dual-kernel GCCL loss with linear combination method $\ell_{lc}(v_i, P_j)$ can be formulated as:
    \begin{equation} \label{eq:lc}
          \frac{\phi(\mathbf{v}_i)^\mathrm{T} \left[\sum_{P_k \in \mathcal{N}(P_j)} \left(\sum_{v_t \in P_k} \alpha\phi^{\prime\prime}(\mathbf{v}_t)+\beta \kappa_P^\prime(\mathbf{c}_j, \mathbf{c}_k)\right) \right]}{\phi(\mathbf{v}_i)^\mathrm{T} \left[\alpha\sum_{P_k \in \mathcal{P}} \sum_{v_t \in P_k}\phi^{\prime}(\mathbf{v}_t)+\beta\sum_{P_k \in \mathcal{P}} \kappa_P(\mathbf{c}_j, \mathbf{c}_k) \right]}
    \end{equation}
    where the feature map $\phi^{\prime\prime}(\mathbf{v}_t) = \mathbf{A}_{j,k}^\mathcal{P} \cdot \phi^{\prime}(\mathbf{v}_t)$ and $\beta = 1-\alpha$.
\end{definition}

The convex linear combination of MKL provides the flexibility to combine the effects of features at different granularity levels~\cite{mkl-two}, allowing us to adjust the contribution of node-level and community-level information to the similarity metric of sample pairs via $\alpha$. Similarly, the summation terms in the numerator and denominator of $\ell_{lc}(v_i,P_j)$ are shared  among nodes within the same community and all nodes, respectively. Such a property enables our method to operate on large-scale graphs with fewer computational resources. We will discuss the applicability of variants $\ell_{tp}(v_i,P_j)$ and $\ell_{lc}(v_i,P_j)$ on different datasets in the experimental section and Appendix B.1.

In practice, we employ the simple graph partition algorithm Metis~\cite{metis} to generate $\mathcal{P}$ and ensure the training efficiency. For the feature map $\phi(\mathbf{v})$, we use the sigmoid function to ensure that the similarity in $\mathcal{X}_G$ remains positive. The commonly used  exponential dot product $\exp(\mathbf{c}_j^\mathrm{T}\mathbf{c}_k/\tau)$ is adopted as $\kappa_P$. In Appendix C.2, we further illustrate that the stability of our method is better than other common kernel-based linear similarity measures.

\subsection{Efficient Model Inference}
\subsubsection{Decoupled GNN Architecture.}~The Over-smoothing problem presents a critical challenge hindering the expressive power of GNN~\cite{s3gcl,smooth}. Here, we investigate the impact of our community contrastive loss on node smoothness and illustrate the necessity of incorporating prior information about $G$. Without loss of generality, we take the node classification task as an example. In this task, each node is associated with a label for classification. 

\begin{proposition} \label{proposition:soomth}
    Let $\Bar{\mathbf{A}}$ be the normalized adjacency matrix constructed from positive node pairs in the contrastive loss $\mathcal{L}_{\mathcal{P}}$ and $y(v)$ denote the label of $v$. Then, the bound of the smoothness between node embeddings is:
    \begin{equation}
    \left\|\mathbf{V} - \Bar{\mathbf{A}}\mathbf{V} \right\|_F \leq \sqrt{2}L \sum_{v_i \in \mathcal{V}}  \frac{1}{1 + \frac{\varepsilon(v_i)\lambda_{v_i}}{(1-\varepsilon(v_i))\gamma_{v_i}}}
    \end{equation}
    where $\lambda_{v_i} =  \mathbb{E}_{v_t \in \mathcal{N}(v_i), y(v_i) = y(v_t)} \kappa_{B}( \{\mathbf{v}_i, \mathbf{c}_j\}, \{\mathbf{v}_t, \mathbf{c}_k \} )$ and $\gamma_{v_i} = \mathbb{E}_{v_t \in \mathcal{N}(v_i), y(v_i)\neq y(v_t)} \kappa_{B}( \{\mathbf{v}_i, \mathbf{c}_j\}, \{\mathbf{v}_t, \mathbf{c}_k \} )$. $L$ is the Lipschitz constant, and $\varepsilon(v_i)$ is the one-hop homophily score of node $v_i$ in $\Bar{\mathbf{A}}$, defined as:
    \begin{equation}
        \varepsilon(v_i) = \frac{1}{|\mathcal{N}(v_i)|} \sum\nolimits_{v_t \in \mathcal{N}(v_i)} \mathds{1}[y(v_i)=y(v_t)] 
    \end{equation}
    where $\mathds{1}[\cdot]$ is the indicator function.
\end{proposition}

This proposition establishes a significant relationship between the smoothness of node embeddings and two key factors: the homophyily score $\varepsilon(\cdot)$ of positive pairs and the bi-level kernel $\kappa_B$. Notably, the smoothness is negatively correlated with $\varepsilon(\cdot)$. This indicates that the excessively expanded community structures can lead to over-smoothing, making node representations indistinguishable. This issue can be addressed by incorporating graph-level structural information as additional details, which complements community information to enhance the node representations.
\begin{equation} \label{equ:Z*}
    \mathbf{Z}^* = \sigma(\mathbf{X}\mathbf{W}_G + 1/K \sum\nolimits_{k=1}^K\widetilde{\mathbf{A}}^k\mathbf{X}\mathbf{W}_G)
\end{equation}
where $K$ denotes capturing local information from the $K$-hops neighborhood of $\widetilde{\mathbf{A}}$ and $\widetilde{\mathbf{A}} = \widetilde{\mathbf{D}}^{-\frac{1}{2}} (\mathbf{A} + \mathbf{I}) \widetilde{\mathbf{D}}^{-\frac{1}{2}}$.

Note that the standard GNN in Eq.~\ref{equ:gnn} can be viewed as a model that tightly couples linear feature transformation with message passing ($\mathbf{Z} = \widetilde{\mathbf{A}}\mathbf{XW}_G$), while the message passing in Eq.~\ref{equ:Z*} occurs in the post-processing phase of GCL model training. This means that our method adopts a decoupled paradigm for GNN. Specifically, we first use a linear layer and incorporate community information into this linear transformation process ($\mathbf{V}=\mathbf{XW}_G$) via our dual-kernel contrastive loss. Then, a training-free graph convolution operator is performed ($\mathbf{Z}=\widetilde{\mathbf{A}}^k\mathbf{V}$). This decoupled paradigm reduces the training burden of GCL and retains the powerful graph-level information processing ability of GNN.

\subsubsection{Graph Representational Similarity Distillation.}
 We adopt a knowledge distillation technique to avoid the significant computational overhead incurred by GNN during inference. Instead of using soft labels as in most previous works~\cite{dis-graph1,t2-gnn}, we directly use the node representations after message passing as the distillation target to encourage the MLP to learn structural information:
\begin{equation} \label{eq:distillation}
    \mathcal{L}_{D} = || \mathrm{MLP}(\mathbf{XW}_G) -  1/K \sum\nolimits_{k=1}^K\widetilde{\mathbf{A}}^k\mathbf{X}\mathbf{W}_G)||_F^2
\end{equation}
Thus, Eq.~\ref{equ:Z*} can be rewritten as:
\begin{equation} \label{eq:Z*D}
    \mathbf{Z}^* = \sigma(\mathbf{X}\mathbf{W}_G + \mathrm{MLP}(\mathbf{XW}_G))
\end{equation}

 Notably, we use the node-level features output by the GCCL as input to the distillation model. Thus, the MLP can capture both community structure and positional information of $G$, which has been shown to be beneficial for graph representational similarity distillation~\cite{nosmog}.
\section{Theoretical Analysis}
We provide theoretical evidence to support the effectiveness of our model, with detailed proofs available in Appendix A. 
\subsection{Properties of Dual-Kernel GCCL Loss}
First, we show that the dual-kernel GCCL loss can approximate the graph contrastive loss on a $k$-step graph diffusion matrix of Eq.~\ref{equ:gcl}. 
\begin{proposition} \label{proposition:appro}
    Assuming the original features $\mathbf{X}$ and the mapped features $\phi(\mathbf{X})$ are bounded by $S_\mathbf{X} :=\max_i||\mathbf{X}_i||_2$ and $S_{\phi(\mathbf{V})} :=\max_i||\phi(\mathbf{V}_i)||^2_2$, respectively. Then, the original contrastive loss of the $k$-step diffusion graph $\mathbf{A}^k$, denoted as $\mathcal{L}_G$, can be approximated by the dual-kernel community contrastive loss, $\mathcal{L}_\mathcal{P}^{lc}$, without considering the influence of combination coefficients:
    \begin{equation*}
        |\mathcal{L}_G-\mathcal{L}^{lc}_\mathcal{P}| \leq L|| \mathbf{A}^k - \mathbf{PP}^\mathrm{T} ||_F S_\mathbf{X}||\mathbf{W}_P||_2 + S_{\phi(\mathbf{V})}
    \end{equation*}
\end{proposition}


Proposition~\ref{proposition:appro} shows that our method can capture the high-order structural information of multi-hop neighborhoods. Minimizing $||\mathbf{PP}^\mathrm{T}-\mathbf{A}||$ is equivalent to minimizing edges between nodes in different communitys, which is a classic minimum cut problem in graph theory~\cite{cut}. This can be achieved by graph partition algorithms, as these algorithms inherently maximize the sum of degrees within communities relative to their external degrees~\cite{structcomp}. Next, we establish formal guarantees for the learned graph representations on downstream tasks.  

\begin{proposition}~\label{proposition:erroe}
    Let $G$ be a graph with $B$ classes and the classes are balanced. Then, there exists a linear function $g(\cdot): \mathcal{X}_G \to \mathbb{R}^C$ such that the error upper bound is
    \begin{equation}
        \mathbb{E}_{v}[|| y(v) - g(\mathbf{v}) ||_2^2] \leq 1+B^2\sum\nolimits_{v} (1+\mathcal{L}_\mathcal{P}(v) - \varepsilon(v)) 
    \end{equation}
\end{proposition}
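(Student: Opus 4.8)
The plan is to construct the linear function $g$ explicitly as the least-squares predictor of the one-hot label vector from the node-level feature $\mathbf{v}$, and then to bound its error in terms of how well representations of same-class adjacent nodes align — which is precisely what the GCCL loss controls. First I would set up notation: let $\mathbf{Y}\in\{0,1\}^{n\times B}$ be the one-hot label matrix, and let $g$ be a linear map whose matrix is the population-optimal regressor, so that $\mathbb{E}_v[\|y(v)-g(\mathbf{v})\|_2^2]$ equals the residual variance. Since the optimal linear predictor does at least as well as any fixed linear map, I can upper-bound this residual by plugging in a convenient candidate — the natural choice is a class-mean predictor, i.e. $g(\mathbf{v})$ is a linear functional that ideally outputs the indicator of $v$'s class. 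Because the classes are balanced (each of size $n/B$), the class-mean construction is well-conditioned and the factor $B^2$ will appear from the normalization $1/(n/B)^2$ or from expanding a sum over $B$ classes via Cauchy–Schwarz.

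Next I would relate the regression residual to the contrastive objective. The key identity is that for node $v_i$ in community $P_j$, the quantity $\ell(v_i,P_j)$ is a normalized similarity that is large exactly when $\mathbf{v}_i$ aligns with the (weighted) features of nodes in neighboring communities; taking $-\log$, the loss term $\mathcal{L}_\mathcal{P}(v_i)$ is small when same-neighborhood features are well-aligned. I would use the homophily score $\varepsilon(v_i)$ to split the positive neighbors into same-class and different-class groups: the same-class contribution (a $\varepsilon(v_i)$-fraction) pulls $\mathbf{v}_i$ toward its correct class representative, while the $(1-\varepsilon(v_i))$-fraction of cross-class neighbors is the source of irreducible error. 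Concretely, I expect to show that the misalignment $\|\mathbf{v}_i - (\text{class representative})\|^2$ is controlled by something like $1 + \mathcal{L}_\mathcal{P}(v_i) - \varepsilon(v_i)$: the $\mathcal{L}_\mathcal{P}(v_i)$ term bounds the total deviation from the positive set, $\varepsilon(v_i)$ discounts the fraction that is genuinely same-class, and the additive $1$ absorbs a normalization constant (e.g. from bounding $-\log$ of a probability, or from the Jensen-type inequality $-\log t \ge 1-t$). Summing over $v$ and pushing the bound through the class-mean / least-squares argument gives the stated form $1 + B^2\sum_v(1+\mathcal{L}_\mathcal{P}(v)-\varepsilon(v))$, where the leading $1$ is the baseline residual variance of the one-hot target and the $B^2$ is the price of the linear-algebraic reduction across $B$ classes.

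The main obstacle, I expect, is making the link between $\mathcal{L}_\mathcal{P}(v_i)$ and the feature misalignment rigorous, because $\mathcal{L}_\mathcal{P}$ is a ratio of sums over communities with kernel weights $\kappa_B$, not a clean squared distance. I would handle this by using the feature-map representation $\kappa_B(\{\mathbf{v}_i,\mathbf{c}_j\},\{\mathbf{v}_t,\mathbf{c}_k\}) \approx \phi(\mathbf{v}_i)^\mathrm{T}\phi(\mathbf{v}_t)\cdot(\dots)$ from the linear-combination definition, so that $\ell(v_i,P_j)$ becomes an inner-product ratio; then a standard softmax-to-alignment argument (lower-bounding the numerator's same-class part and upper-bounding the denominator, then applying $-\log x \ge 1-x$) converts $\mathcal{L}_\mathcal{P}(v_i) \ge \varepsilon(v_i) - 1 + c\|\mathbf{v}_i - \mu_{y(v_i)}\|^2$ for an appropriate aggregated target $\mu_{y(v_i)}$, which rearranges to exactly the quantity $1 + \mathcal{L}_\mathcal{P}(v_i) - \varepsilon(v_i)$ bounding the misalignment. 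A secondary subtlety is that the $\mu$'s are not literally class means of $\mathbf{v}$, so I would either redefine $g$ to regress onto them directly (still linear) or absorb the gap via the balanced-class assumption; I would relegate these bookkeeping details, along with the precise constants, to Appendix A.
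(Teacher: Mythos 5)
Your overall architecture matches the paper's: the linear map $g$ is instantiated as the class-mean regressor (in the paper, $\mathbf{W}=\mathbf{V}^{\mathrm{T}}\mathbf{B}$ with $\mathbf{B}_{v,i}=b_i^{-1}\mathds{1}[y_v=i]$), the error is split by a triangle inequality into a homophily term and an alignment term, the homophily term is bounded by counting cross-class positive neighbors to produce $\frac{B^2}{n}(1-\varepsilon(v))$ exactly as you anticipate (with $B^2$ coming from the $1/b_i = B/n$ normalization), and the remaining term is the one that must be tied to $\mathcal{L}_\mathcal{P}$. The one place you diverge is precisely that last link, and it is also the place where your proposal is only a sketch. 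The paper does \emph{not} run a pointwise softmax-to-alignment argument with $-\log t \ge 1-t$; instead it inserts $\mathbf{V}\mathbf{V}^{\mathrm{T}}\mathbf{B}$ as an intermediate, kills the $\|\mathbf{V}(\mathbf{V}^{\mathrm{T}}\mathbf{B}-\mathbf{W})\|$ piece by the choice of $\mathbf{W}$, bounds the rest by $\frac{B^2}{n^3}\|\bar{\mathbf{A}}-\mathbf{V}\mathbf{V}^{\mathrm{T}}\|_2^2$, and then appeals to the known equivalence between minimizing the contrastive loss and the matrix-factorization objective $\min\|\bar{\mathbf{A}}-\mathbf{V}\mathbf{V}^{\mathrm{T}}\|_2^2$ (together with Proposition 3) to replace that quantity by $\mathcal{L}_\mathcal{P}$ ``up to equivalence under minimization.''

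The concrete gap in your version is the asserted inequality $\mathcal{L}_\mathcal{P}(v_i) \ge \varepsilon(v_i) - 1 + c\,\|\mathbf{v}_i - \mu_{y(v_i)}\|^2$. Applying $-\log t \ge 1-t$ gives you $\mathcal{L}_\mathcal{P}(v_i) \ge 1-\ell(v_i,P_j)$, but $1-\ell(v_i,P_j)$ is one minus a ratio of kernel sums; turning it into (a) a lower bound featuring the class-homophily fraction $\varepsilon(v_i)$ of the \emph{positive} set and (b) an additive squared distance to a class representative requires both a uniform lower bound on same-class kernel values and a conversion from inner products to squared norms, which in turn needs boundedness or normalization of $\phi(\mathbf{v})$ that you have not stated. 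You correctly flag this as the main obstacle, but as written the step does not go through; you would either need to add those assumptions and carry out the conversion, or do what the paper does and route the argument through the matrix-factorization characterization of contrastive-loss minimizers. Note also that the paper's own bridge is itself an ``equivalence under minimization'' ($\stackrel{\text{c}}{=}$) rather than a pointwise inequality, so neither route is fully tight; but yours currently leaves the hardest lemma unproved rather than cited.
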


Proposition~\ref{proposition:erroe} shows that the classification error on learned representations is bounded by the dual-kernel contrastive loss $\mathcal{L}_\mathcal{P}$ and the one-hop homophily score $\varepsilon(v)$ of node $v$ in $\mathbf{\Bar{A}}$. Note that $\varepsilon(v)$ is affected by the graph partition. In general, overly expansive community structures tend to result in a low value of $\varepsilon(v)$. Combining with Proposition~\ref{proposition:soomth}, this requires introducing appropriate graph-level structural information to ensure performance. Conversely, in heterophilic graphs, an expanding receptive field provides additional information that cannot be captured within local neighborhoods~\cite{smooth}. This means we can adapt to graphs with different homophily levels by adjusting the number of communities and the range of local neighborhoods.



\subsubsection{Remark.} The contrastive loss on coarsened graph can be seen as a special case of our method, i.e., when $\alpha=0$ in Eq.~\ref{eq:lc}. Consequently, our method naturally inherits the properties of these method. For instance, our dual-kernel GCCL loss can be seen as introducing an additional regularization term with better generalization, which makes our method more robust to minor perturbation~\cite{structcomp}. Please refer to the Appendix C.1 for more details.

\subsection{Properties of Distillation Loss}
 Based on the graph homophily assumption, nodes of the same semantic class typically share similar neighborhood representations. Thus, the local neighborhood representation $\mathbf{Z}$ can be viewed as sampled from a standard Gaussian distribution centered at $\mathbf{Z}_Y$, i.e., $Z|Y \sim N(\mathbf{Z}_Y, I)$, where $Y$ denotes the latent semantic class of the $K$-hop patterns and $Z$ is the random variable corresponding to $\mathbf{Z}$~\cite{graphacl}. Then, following~\cite{eccv}, we have:
\begin{proposition} \label{proposition:dis}
    Minimizing the distillation loss $\mathcal{L}_D$ is equivalent to maximizing mutual information between the representation $\mathbf{V}$ and the $K$-hop pattern $Y$:
    \begin{equation}
        \mathcal{L}_D \geq H(V|Y) - H(V) = -I(Y;V)
    \end{equation}
    where $V$ is the random variable corresponding to $\mathbf{V}$.
\end{proposition}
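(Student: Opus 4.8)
\textbf{Proof proposal for Proposition~\ref{proposition:dis}.}

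The plan is to unfold the squared-Frobenius distillation objective $\mathcal{L}_D$ as an expected squared-error reconstruction term and then recognize it, up to the Gaussian normalization constant, as a cross-entropy / negative log-likelihood surrogate that lower-bounds a conditional entropy. Concretely, I would first write $\mathcal{L}_D = \sum_{v} \| \mathrm{MLP}(\mathbf{v}) - \mathbf{Z}_v \|_2^2$ where $\mathbf{Z}_v$ is the row of $1/K\sum_k \widetilde{\mathbf{A}}^k\mathbf{X}\mathbf{W}_G$ corresponding to $v$, and treat $\mathbf{Z}$ as the realization of the random variable $Z$ with the stated conditional law $Z\mid Y \sim N(\mathbf{Z}_Y, I)$. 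Under this Gaussian model, $-\log q(\mathbf{Z}\mid \mathbf{V})$ for a Gaussian decoder with mean $\mathrm{MLP}(\mathbf{V})$ and identity covariance equals $\tfrac12\|\mathrm{MLP}(\mathbf{V}) - \mathbf{Z}\|_2^2$ plus a constant, so minimizing $\mathcal{L}_D$ is (up to affine reparametrization) minimizing $\mathbb{E}[-\log q(Z\mid V)]$, i.e. a variational cross-entropy between the true conditional $p(Z\mid V)$ and the decoder $q$.

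The second step is the standard variational bound: $\mathbb{E}[-\log q(Z\mid V)] = H(Z\mid V) + \mathbb{E}_{V}\big[\mathrm{KL}(p(Z\mid V)\,\|\,q(Z\mid V))\big] \ge H(Z\mid V)$, and then $H(Z\mid V) \ge H(Z\mid V, Y) = H(Z\mid Y)$ since conditioning reduces entropy and, once $Y$ is known, $Z$ is conditionally independent of $V$ by the generative assumption (the $K$-hop pattern $Y$ is the sufficient latent variable). So far this controls $H(Z\mid Y)$, not $H(V\mid Y)$; to land on the statement as written I would invoke the symmetry/invertibility that the paper implicitly uses when it treats $\mathrm{MLP}$ as (asymptotically) a bijection onto the representation space — the distillation ties $\mathbf{V}$ and $\mathbf{Z}$ together, and $H(V) - H(Z)$ is a constant absorbed into the bound, letting us replace $H(Z\mid Y) - H(Z)$ by $H(V\mid Y) - H(V) = -I(Y;V)$. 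Chaining these inequalities gives $\mathcal{L}_D \ge H(V\mid Y) - H(V) = -I(Y;V)$, which is exactly the claim.

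The main obstacle is the last move — justifying the passage from bounding $H(Z\mid Y)$ to bounding $-I(Y;V)$. The clean version of this is to follow the representational-distillation argument of~\cite{eccv}: view the MLP as the approximate inverse of the teacher map, so that $\mathbf{V}$ and $\mathbf{Z}$ carry the same information about $Y$ (i.e. $I(Y;V) = I(Y;Z)$ in the idealized limit, or the difference is a controllable slack), whence $-I(Y;V) = H(V\mid Y) - H(V)$ and the constants cancel. I would state this dependence explicitly and cite~\cite{eccv} and~\cite{graphacl} for the Gaussian-cluster modeling of $K$-hop patterns, rather than re-derive it; the remaining pieces (unfolding the Frobenius norm, the Gaussian log-likelihood identity, KL-nonnegativity, conditioning-reduces-entropy) are routine.
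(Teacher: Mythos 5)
Your ingredients (the Gaussian log-likelihood identity for the squared loss, the decomposition of cross-entropy into entropy plus KL, KL non-negativity) are the same ones the paper uses, but you point the conditional the wrong way, and the gap you flag at the end is a real one. By modelling the MLP output as the mean of a decoder $q(Z\mid V)$ you bound $\mathbb{E}[-\log q(Z\mid V)] \geq H(Z\mid V) \geq H(Z\mid Y)$, which is a statement about the information retained about $Z$; to get back to $-I(Y;V)$ you then have to posit that the MLP is (asymptotically) invertible so that $I(Y;V)=I(Y;Z)$. That assumption is not available: it is essentially what the distillation is trying to achieve, the bound is meant to hold for the loss as a training objective rather than only at an idealized optimum, and the paper never needs it.

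The paper's proof flips the roles. Since $Z\mid Y\sim N(\mathbf{Z}_Y, I)$, the class-conditional density of $Z$ evaluated at the student output satisfies $-\log p_{Z\mid Y}(\mathbf{V}_v) = \tfrac12\lVert\mathbf{V}_v-\mathbf{Z}_Y\rVert_2^2 + \mathrm{const}$, so $\mathcal{L}_D$ is (up to constants, and up to replacing $\mathbf{Z}_v$ by $\mathbf{Z}_{Y(v)}$) the \emph{conditional cross-entropy} $H(V;Z\mid Y)$ of the law of $V$ given $Y$ relative to that of $Z$ given $Y$. Decomposing $H(V;Z\mid Y) = H(V\mid Y) + \mathcal{D}_{KL}(V\Vert Z\mid Y) \geq H(V\mid Y)$ lands directly on $H(V\mid Y)$, and the final step is simply $H(V\mid Y) \geq H(V\mid Y) - H(V) = -I(Y;V)$ using $H(V)\geq 0$. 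So the repair to your argument is not an invertibility assumption but a swap of which variable plays the sample and which plays the model: score $\mathbf{V}_v$ under the teacher's class-conditional Gaussian, rather than scoring $\mathbf{Z}_v$ under a student decoder. (Both arguments share the same residual informalities --- the substitution of $\mathbf{Z}_{Y(v)}$ for $\mathbf{Z}_v$, and the non-negativity of $H(V)$ for a continuous $V$ --- but those are inherited from the paper, not introduced by you.)
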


Proposition~\ref{proposition:dis} shows that minimizing the distillation loss in Eq.~\ref{eq:distillation} can promote the maximizing of mutual information $I(Y;V)$ between node representations containing community information and the latent semantic classes of the $K$-hop patterns. This allows the distillation model to simultaneously leverage both community and graph-level structural information. Given the above characteristics, the distilled representations exhibit performance comparable to, or even better than, the pre-distillation ones. We will verify this conclusion in the following experiments.

\begin{table*}[t!]  
	\centering
		\begin{tabular}{lcccccccc}
			\toprule[1pt]
			Methods & Cora & CiteSeer & PubMed & Wiki-CS  & Amz.Photo & Co.CS & Co.Physics \\
                \midrule[0.5pt]
                DGI & 82.12$\pm$1.28 & 71.58$\pm$1.21 & 78.87$\pm$2.64 & 75.73$\pm$0.13  & 91.49$\pm$0.25 & 91.95$\pm$0.40 & 94.57$\pm$0.39 \\
			GCA & 79.04$\pm$1.39 & 65.62$\pm$2.46 & 81.55$\pm$2.47 & 79.35$\pm$0.42 & 92.78$\pm$0.17 & 93.32$\pm$0.12 & 95.87$\pm$0.15 \\
                gCooL & 81.63$\pm$1.39 & 71.32$\pm$1.64 & \underline{82.16$\pm$1.31} & 78.87$\pm$0.22 & 93.18$\pm$0.12 & 93.27$\pm$0.15 & 95.13$\pm$0.11 \\
			CSGCL & 79.39$\pm$1.57 & 70.03$\pm$1.49 & 80.37$\pm$2.06 & 78.57$\pm$0.14 & 93.24$\pm$0.37 & 93.59$\pm$0.09 & 95.32$\pm$0.24 \\
                SP-GCL & 82.78$\pm$1.35 & 71.81$\pm$1.06 & 81.14$\pm$1.82 & 80.21$\pm$0.37 & 92.49$\pm$0.31 & 93.05$\pm$0.10 & 95.12$\pm$0.15 \\
                GraphECL & \underline{82.88$\pm$0.95} & \underline{72.26$\pm$0.89} & 82.14$\pm$1.63 & 80.17$\pm$0.15 & 93.39$\pm$0.46 & \underline{94.12$\pm$0.16} & \underline{96.03$\pm$0.07} \\
			SGRL & 82.64$\pm$1.92 & 71.73$\pm$1.58 & 80.91$\pm$1.84 & 80.67$\pm$0.26 & 93.29$\pm$0.42 & 93.61$\pm$0.26 & 95.99$\pm$0.10 \\        
			\midrule[0.5pt]            
			BGRL & 82.33$\pm$1.35 & 71.59$\pm$1.42 & 79.23$\pm$1.74 & 78.74$\pm$0.22 & 93.24$\pm$0.29 & 93.26$\pm$0.36 & 95.76$\pm$0.38 \\
			SUGRL & 81.34$\pm$1.23 & 71.02$\pm$1.77 & 80.53$\pm$1.62 & 79.12$\pm$0.67 & 93.07$\pm$0.15 & 92.83$\pm$0.23 & 95.38$\pm$0.11 \\				
			GGD & 82.34$\pm$1.57 & 71.04$\pm$1.47 & 81.28$\pm$1.31 & 78.72$\pm$0.61  & 92.53$\pm$0.63 & 92.44$\pm$0.19 & 95.03$\pm$0.21 \\
			SGCL & 82.57$\pm$1.43 & 71.65$\pm$1.31 & 81.93$\pm$1.66 & 79.85$\pm$0.53 & \underline{93.46$\pm$0.31} & 93.29$\pm$0.17 & 95.78$\pm$0.11 \\
			St.Comp & 81.28$\pm$1.29 & 71.46$\pm$1.54 & 80.47$\pm$1.63 & 80.57$\pm$0.11  & 92.62$\pm$0.14 & 92.56$\pm$0.12 & 95.44$\pm$0.10 \\						
			E2Neg & 81.47$\pm$1.67 & 71.69$\pm$1.92 & 80.93$\pm$1.49 & \underline{81.12$\pm$0.57} & 93.36$\pm$0.76 & 93.48$\pm$0.59 & 95.86$\pm$0.29 \\
			\midrule[0.5pt]											
			Ours & \textbf{83.77$\pm$1.37} & \textbf{72.68$\pm$1.19} & \textbf{82.56$\pm$1.85} & \textbf{81.75$\pm$0.36}  & \textbf{93.86$\pm$0.15} & \textbf{94.68$\pm$0.14} & \textbf{96.12$\pm$0.17} \\	
		\bottomrule[1pt]
        \toprule[1pt]
			Methods & Cornell & Texas & Wisconsin & Actor  & Crocodile & Amz.Ratings & Questions \\
			\midrule[0.5pt]
                HGRL & 51.78$\pm$1.03 & 61.83$\pm$0.71 & 63.90$\pm$0.58 & 27.95$\pm$0.30 & 61.87$\pm$0.45 & 38.37$\pm$0.36 & - \\
			L-GCL & 52.11$\pm$2.37 & 60.68$\pm$1.18 & 65.28$\pm$0.52 & 32.55$\pm$1.18 & 60.18$\pm$0.43 & - & - \\
			DSSL & 53.15$\pm$1.28 & 62.11$\pm$1.53 & 62.25$\pm$0.55 & 28.15$\pm$0.31 & 62.98$\pm$0.51 & - & - \\
			SP-GCL & 52.29$\pm$1.21 & 59.81$\pm$1.33 & 60.12$\pm$0.39 & 28.94$\pm$0.69 & 61.72$\pm$0.21 & 43.11$\pm$0.32 & 75.08$\pm$0.49 \\
			GREET & 72.91$\pm$1.13 & \underline{84.59$\pm$4.20} & 80.98$\pm$5.62 & 36.14$\pm$1.38 & \underline{66.75$\pm$0.56} & 41.19$\pm$0.25 & - \\
			GraphACL & 59.33$\pm$1.48 & 71.08$\pm$0.34 & 69.22$\pm$0.40 & 30.03$\pm$0.13 & 66.17$\pm$0.24  & 41.49$\pm$0.45 & 74.85$\pm$0.98 \\
                HeterGCL & 75.48$\pm$2.83 & 74.71$\pm$3.59 & 75.58$\pm$4.47 & \underline{37.20$\pm$0.44} & 65.42$\pm$0.57 & - & - \\	
			PolyGCL & 73.78$\pm$3.51 & 72.16$\pm$3.51 & 76.08$\pm$3.33 & 34.37$\pm$0.69 & 65.95$\pm$0.59 & \underline{44.29$\pm$0.43} & \underline{75.33$\pm$0.67} \\
                M3P-GCL & \underline{75.59$\pm$3.81} & 80.84$\pm$1.62 & \underline{81.67$\pm$2.23} & 35.12$\pm$0.97 & 65.67$\pm$0.31  & 42.91$\pm$0.17 & - \\
			\midrule[0.5pt]											
			Ours & \textbf{76.49$\pm$2.43} & \textbf{85.41$\pm$3.01} & \textbf{85.17$\pm$3.02} & \textbf{37.74$\pm$0.78} & \textbf{67.05$\pm$0.72}   & \textbf{47.51$\pm$0.68} & \textbf{76.35$\pm$1.05} \\
            \bottomrule[1pt]                
        \end{tabular}
        \caption{Node classification results on homophilic (top) and heterophilic (bottom) graphs ($\%$). AUC is used for Questions and accuracy for all other datasets. Best and second-best results are shown in \textbf{bold} and \underline{underline}, respectively. '-' indicates either unavailability of the official implementation or exceeding 24 GB GPU memory during evaluation.}
	\label{tab:nc}
\end{table*}

\section{Experimental Study}

\subsection{Experimental Setup}
\subsubsection{Datasets.}
We evaluate on 16 datasets, including 7 homophilic graphs, 7 heterophilic graphs, and 2 large-scale graphs. See Appendix D.1 for dataset statistics.

\subsubsection{Baselines.}
We compare our model with the following three categories of methods, which are described in detail in Appendix D.2.
\begin{itemize}
    \item Classic GCL methods: DGI~\cite{dgi}, GCA~\cite{gca}, gCooL~\cite{gcool}, CSGCL~\cite{csgcl}, SP-GCL~\cite{sp-gcl}, GraphECL~\cite{graphecl} and SGRL~\cite{sgrl}. 
    \item Efficiency-oriented GCL methods: BGRL~\cite{bgrl}, SUGRL~\cite{sugrl}, GGD~\cite{ggd}, SGCL~\cite{sgcl}, StructComp~\cite{structcomp} and E2Neg~\cite{e2neg}.
    \item Heterophily-aware GCL methods: HGRL~\cite{hgrl}, L-GCL~\cite{lgcl}, DSSL~\cite{dssl}, GREET~\cite{greet}, GraphACL~\cite{graphacl}, HeterGCL~\cite{hetergcl}, PolyGCL~\cite{polygcl} and M3P-GCL~\cite{m3pgcl}.
\end{itemize}

\subsubsection{Evaluation Protocols.}
We evaluate downstream task performance using a linear classifier trained on frozen graph representations. Results are averaged over 10 random splits per dataset, with standard deviations reported.



\subsubsection{Implementation Details.}
A two-layer MLP is used as the distillation model. All experiments are implemented using PyTorch and run on a server equipped with an NVIDIA 3090 GPU (24GB memory). The detailed hyperparameter settings are reported in Appendix D.3.

\subsection{Experimental Results}
\subsubsection{Exp-1: Effectiveness Evaluation.}
We conducted comprehensive node classification experiments on both homophilic and heterophilic graphs to evaluate the effectiveness of our method, as shown in Table~\ref{tab:nc}. We also present supplementary results on other graph analysis tasks in Appendix D.4, demonstrating that our method can effectively adapt to various downstream tasks. 

These results demonstrate that: (i) Our method exhibits consistent and superior generalization performance across graphs with varying levels of homophily. (ii) Community-based methods, such as gCooL, CSGCL and E2Neg, show significant competitiveness in node classification, confirming the effectiveness of leveraging community structure in GCL. (iii) StructComp performs contrastive learning on coarsened graphs, ignoring node-level information, which may result in suboptimal performance on node-level tasks.

\subsubsection{Exp-2: Scalability Evaluation.}
We evaluate the scalability of our method by comparing it with efficiency-oriented GCL methods on large-scale graphs, as shown in Table~\ref{tab:nc-large} and Figure~\ref{fig-inference}. For fairness, we excluded memory footprint reports for methods trained with mini-batch processing. 

These results demonstrate that: (i) Our method consistently achieves the best performance on large-scale graphs. (ii) Although StructComp achieves the lowest training overhead by ignoring node-level information, our method effectively improves performance with only a slight increase in computational complexity, demonstrating a well-balanced trade-off between scalability and performance. It is noteworthy that the accuracy gain over SOTA baselines is 1.7$\%$ on Ogbn-Products with two million nodes, which is a substantial improvement given that these methods have been carefully fine-tuned on the corresponding datasets. (iii) Our method consistently outperforms other methods in inference efficiency, and the inference time scales linearly with graph size. On Ogbn-Products, our method is about 180$\times$ faster than the best baseline, which highlights the superiority of knowledge distillation technique for the decoupled GNN.

\begin{figure}[t]
\centering
\includegraphics[width=0.44\textwidth]{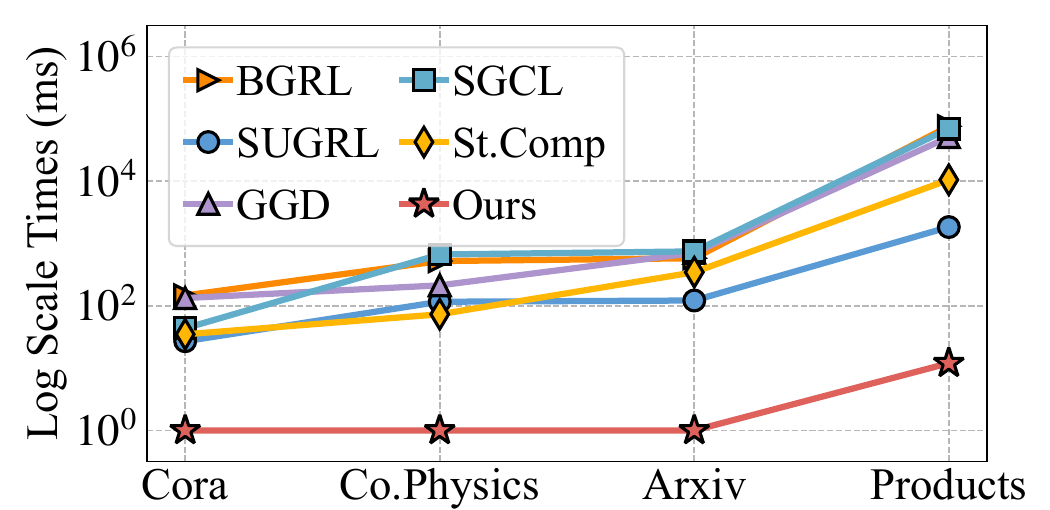} 
\caption{Inference efficiency comparison.}
\label{fig-inference}
\end{figure}

\begin{table*} %
	\centering
		\begin{tabular}{lccccc|ccccc}
			\toprule[1pt]
			\multirow{2}{*}{Methods} &\multicolumn{5}{c}{Ogbn-Arxiv} &\multicolumn{5}{c}{Ogbn-Products}  \\
			\cmidrule[0.5pt](lr){2-11} 
			&Acc &Time.T(\textbf{s}) &Mem.T &Time.I(\textbf{s}) &Mem.I &Acc &Time.T(\textbf{m}) &Mem.T  &Time.I(\textbf{s}) &Mem.I \\
                \midrule[0.5pt]
			BGRL &71.6 & 1.43 &10.7 &0.58 &6.1 &64.0 & 53.3  & - &76.33 &22.8 		\\
			SUGRL &67.8 &0.11 &\textbf{2.6} &\underline{0.12} &\underline{1.5} &72.9 &1.5 & 23.5 &\underline{1.84} &21.3		\\
			GGD &71.6 &0.95 &14.3 &0.71 &1.9 &75.7 & 12.7 & - &143.36 & 22.8 		\\
			SGCL &71.0 & 0.09 & 5.1  &0.75 &4.2 &\underline{76.0} & 1.9 & - &69.94 &22.9		\\
			St.Comp &\underline{71.8} &\textbf{0.05} &\underline{3.4} &0.35 &1.6 &75.5 &\textbf{0.001} &\textbf{5.3} &10.54 &\underline{12.0}		\\
			\midrule[0.5pt]
			Ours &\textbf{72.2} &\underline{0.08} & 4.2 &\textbf{0.001} &\textbf{1.1} &\textbf{77.7} &\underline{0.003} & \underline{8.8} &\textbf{0.01} &\textbf{5.3}		\\
			\bottomrule[1pt]
		\end{tabular}
        	\caption{Scalability evaluation on large-scale datasets. Time.T / I: training / inference time per epoch; Mem.T / I: peak GPU memory usage during training / inference (GB).}    
	\label{tab:nc-large}
\end{table*}

\begin{figure*}[t]
\centering
\includegraphics[width=0.98\textwidth]{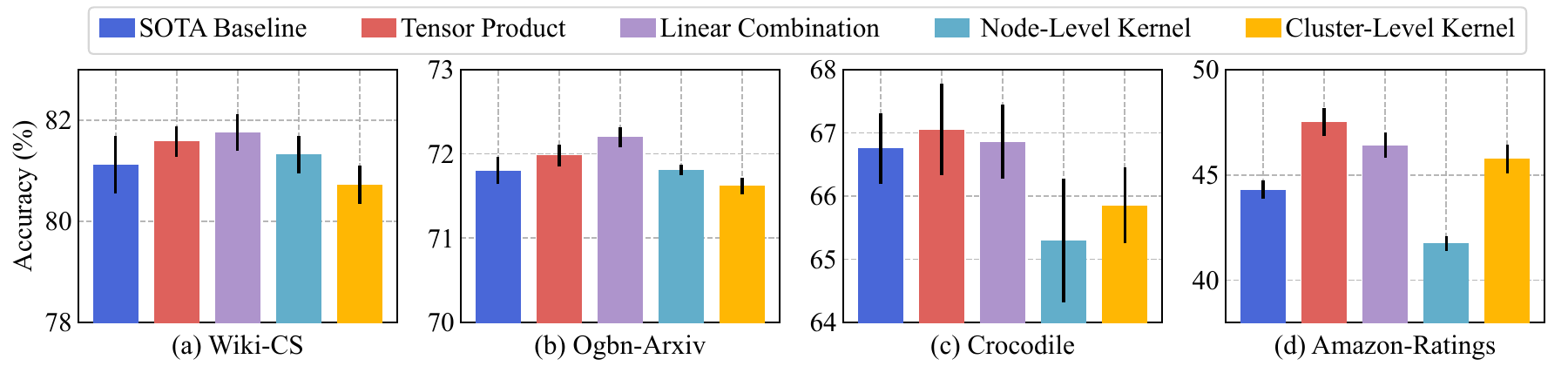} 
\caption{Comparison of different dual-kernel graph community contrastive loss variants.}
\label{fig-kernel}
\end{figure*}


\subsubsection{Exp-3: Necessity of Dual-Kernel.}
We validate the necessity of using dual-kernel contrastive loss to integrate node-level and community-level information by comparing different variants of our method with SOTA baselines. Specifically, we analyze two schemes, the tensor product and linear combination schemes, as well as two additional variants that focus solely on node-level kernel $(\alpha=0)$ and community-level kernel $(\alpha=1)$, as shown in Figure~\ref{fig-kernel}. 

These results demonstrate that: (i) The dual-kernel method outperforms SOTA baselines and the variants focusing on a single kernel, which highlights the effectiveness of MKL in integrating diverse levels of information. (ii) On homophilic graphs, the linear combination scheme performs better, with the node-level kernel outperforming the community-level kernel. Conversely, on heterophilic graphs, the tensor product scheme and the community-level kernel achieve superior performance. This suggests that homophilic graphs should pay more attention to node-level information. Furthermore, we analyze the impact of the combination coefficient $\alpha$ on performance in Appendix D.4.

\section{Related Work}
\subsubsection{Graph Contrastive Learning with Community.}
Recent studies have demonstrated the effectiveness of exploiting community structure in GCL, which can be categorized into view-optimized and loss-optimized methods. (i) View-optimized methods focus on preserving community information in augmented views. For instance, SEGA~\cite{sega} uses an encoding tree containing hierarchical community properties as the anchor view. CI-GCL~\cite{ci-gcl} constrains the view augmentation process based on community invariance. StructComp~\cite{structcomp} performs GCL on community-wise graph. (ii) Loss-optimized methods can effectively avoid mislabeling closely connected nodes as negative samples. For example, gCooL~\cite{gcool} considers nodes and the centroid of their respective communities as positive sample pairs. CS-GCL~\cite{csgcl} adjusts the weight of contrastive samples based on community strength. E2Neg~\cite{e2neg} selects representative negative samples from communities. Despite these significant advancements, they are still limited by the message passing mechanism of GNN.

\subsubsection{Kernel-Based Representation Learning.}
Kernel methods have been used to address the scalability issues of Graph Transformers in supervised learning scenarios, as they can bypass the cumbersome explicit computation of all-pairs attentions~\cite{sgformer,polynormer}. MKL further enhances the expressiveness of kernel methods by combining multiple kernel functions to integrate features from different perspectives~\cite{mkl-two,mkl-tkde}. Examples include recent studies that leverage MKL for federated learning~\cite{mkl-federated}, clustering~\cite{mkl-cluster} and graph classification~\cite{n2c}. Despite the widespread application of kernel and MKL methods, their utilization for unsupervised graph representation learning remains an underexplored area.


\section{Conclusion}
In this work, we propose a scalable and efficient dual-kernel graph community contrastive learning method, underpinned by a straightforward graph partition algorithm and MKL techniques. This design enables us to capture community-level structural features in linear time while preserving essential node-level information. Furthermore, the proposed knowledge distillation technique of the decoupled GNN is particularly suitable for latency-constrained applications. Both theoretical analysis and experimental evaluations verify the effectiveness of our method. We also envision future directions, such as learning adaptive graph partition, integrating edge-based attribute features, and extending to dynamic graphs or more complex graph applications. 

\section{Acknowledgments}
This work was supported by Yunnan Fundamental Research Project (202501AS070102), Program of Yunnan Key Laboratory of Intelligent Systems and Computing (202405AV340009), Future Industry Science and Technology Special Project of Yunnan University (YDWLCY202505), and Scientific Research Fund Project of Yunnan Education Department (2025Y0061). For any correspondence, please refer to Liang Duan.

\bibliography{aaai2026}

@inproceedings{ncla,
    title={Neighbor Contrastive Learning on Learnable Graph Augmentation},
    author={Shen, Xiao and Sun, Dewang and Pan, Shirui and Zhou, Xi and Yang, Laurence T},
    booktitle={Proceedings of the 37th AAAI Conference on Artificial Intelligence \normalfont{(AAAI)}},
    pages={9782--9791},
    year={2023}
}

@article{lgcl,
    title={Localized Contrastive Learning on Graphs},
    author={Zhang, Hengrui and Wu, Qitian and Wang, Yu and Zhang, Shaofeng and Yan, Junchi and Yu, Philip S},
    journal={arXiv preprint arXiv:2212.04604},
    year={2022}
}

@article{mkl-tkde,
    title={Multiple Kernel Representation Learning on Networks},
    author={Celikkanat, Abdulkadir and Shen, Yanning and Malliaros, Fragkiskos D},
    journal={IEEE Transactions on Knowledge and Data Engineering},
    volume={35},
    number={6},
    pages={6113--6125},
    year={2022},
    publisher={IEEE}
}

@inproceedings{n2c,
    title = {Cluster-wise Graph Transformer with Dual-granularity Kernelized Attention},
    author={Huang, Siyuan and Song, Yunchong and Zhou, Jiayue and Lin, Zhouhan},
    booktitle = {Proceedings of the 38th Annual Conference on Neural Information Processing Systems \normalfont{(NeurIPS)}},
    pages={33376--33401},
    year={2024}
}

@inproceedings{structcomp,
    title={StructComp: Substituting Propagation with Structural Compression in Training Graph Contrastive Learning},
    author={Zhang, Shengzhong and Yang, Wenjie and Cao, Xinyuan and Zhang, Hongwei and Huang, Zengfeng},
    booktitle={Proceedings of the 12th International Conference on Learning Representations \normalfont{(ICLR)}},
    year={2024}
}

@inproceedings{dropout-gnn,
    author = {Yuankai Luo and Xiao{-}Ming Wu and Hao Zhu},
    title  = {Beyond Random Masking: When Dropout meets Graph Convolutional Networks},
    booktitle={Proceedings of the 13th  International Conference on Learning Representations \normalfont{(ICLR)}},
    year = {2025}
}

@inproceedings{dropmessage,
    author = {Taoran Fang and
                  Zhiqing Xiao and
                  Chunping Wang and
                  Jiarong Xu and
                  Xuan Yang and
                  Yang Yang},
    title = {DropMessage: Unifying Random Dropping for Graph Neural Networks},
    booktitle = {Proceedings of the 37th AAAI Conference on Artificial Intelligence \normalfont{(AAAI)}},
    year = {2023}
}

@article{metis,
  title={A fast and High Quality Multilevel Scheme for Partitioning Irregular Graphs},
  author={Karypis, George and Kumar, Vipin},
  journal={SIAM Journal on scientific Computing},
  volume={20},
  number={1},
  pages={359--392},
  year={1998}
}

@InProceedings{smooth,
  title = 	 {Less is More: on the Over-Globalizing Problem in Graph Transformers},
  author =       {Xing, Yujie and Wang, Xiao and Li, Yibo and Huang, Hai and Shi, Chuan},
  booktitle = 	 {Proceedings of the 41st International Conference on Machine Learning \normalfont{(ICML)}},
  pages = 	 {54656--54672},
  year = 	 {2024}
}

@inproceedings{graphacl,
	title={Simple and Asymmetric Graph Contrastive Learning without Augmentations},
	author={Xiao, Teng and Zhu, Huaisheng and Chen, Zhengyu and Wang, Suhang},
	booktitle={Proceedings of the 37th Annual Conference on Neural Information Processing Systems \normalfont{(NeurIPS)}},
	pages = {16129-16152},
	year={2023}
}

@inproceedings{s3gcl,
  title={S3GCL: Spectral, Swift, Spatial Graph Contrastive Learning},
  author={Wan, Guancheng and Tian, Yijun and Huang, Wenke and Chawla, Nitesh V and Ye, Mang},
  booktitle = {Proceedings of the 41st International Conference on Machine Learning \normalfont{(ICML)}},
  pages = {49973-49990},
  year={2024}
}

@inproceedings{polynormer,
  title={Polynormer: Polynomial-Expressive Graph Transformer in Linear Time},
  author={Deng, Chenhui and Yue, Zichao and Zhang, Zhiru},
    booktitle={Proceedings of the 12th International Conference on Learning Representations \normalfont{(ICLR)}},
  year={2024}
}

@inproceedings{sgformer,
 author = {Wu, Qitian and Zhao, Wentao and Yang, Chenxiao and Zhang, Hengrui and Nie, Fan and Jiang, Haitian and Bian, Yatao and Yan, Junchi},
 booktitle={Proceedings of the 37th Annual Conference on Neural Information Processing Systems \normalfont{(NeurIPS)}},
 pages = {64753--64773},
 title = {SGFormer: Simplifying and Empowering Transformers for Large-Graph Representations},
 year = {2023}
}

@inproceedings{graphecl,
  title={Efficient Contrastive Learning for Fast and Accurate Inference on Graphs},
  author={Xiao, Teng and Zhu, Huaisheng and Zhang, Zhiwei and Guo, Zhimeng and Aggarwal, Charu C and Wang, Suhang and Honavar, Vasant G},
  booktitle = {Proceedings of the 41st International Conference on Machine Learning \normalfont{(ICML)}},
  pages={54363--54381},
  year={2024}
}

@inproceedings{nosmog,
  title={Learning Mlps on Graphs: A Unified View of Effectiveness, Robustness, and Efficiency},
  author={Tian, Yijun and Zhang, Chuxu and Guo, Zhichun and Zhang, Xiangliang and Chawla, Nitesh},
    booktitle={Proceedings of the 11th International Conference on Learning Representations \normalfont{(ICLR)}},
  year={2023}
}

@inproceedings{dis-graph1,
  title={Quantifying the Knowledge in Gnns for Reliable Distillation into MLPs},
  author={Wu, Lirong and Lin, Haitao and Huang, Yufei and Li, Stan Z},
  booktitle = {Proceedings of the 40th International Conference on Machine Learning \normalfont{(ICML)}},
  pages={37571--37581},
  year={2023}
}

@inproceedings{ dgi,
	author={Veli{\v{c}}kovi{\'c}, Petar and Fedus, William and Hamilton, William L and Li{\`o}, Pietro and Bengio, Yoshua and Hjelm, R Devon},
	title = {Deep Graph Infomax},
    booktitle={Proceedings of the 7th International Conference on Learning Representations \normalfont{(ICLR)}},
	year = {2019}
}

@article{cut,
  title={Clustering by Minimum Cut Hyperplanes},
  author={Hofmeyr, David P},
  journal={IEEE Transactions on Pattern Analysis and Machine Intelligence},
  volume={39},
  number={8},
  pages={1547--1560},
  year={2016}
}

@inproceedings{eccv,
  title={A Unifying Mutual Information View of Metric Learning: Cross-Entropy vs. Pairwise Losses},
  author={Boudiaf, Malik and Rony, J{\'e}r{\^o}me and Ziko, Imtiaz Masud and Granger, Eric and Pedersoli, Marco and Piantanida, Pablo and Ayed, Ismail Ben},
  booktitle={Proceedings of the 16th Europran Conference on Computer Vision \normalfont{(ECCV)}},
  pages={548--564},
  year={2020},
  organization={Springer}
}

@inproceedings{bgrl,
	title={Large-Scale Representation Learning on Graphs via Bootstrapping},
	author={Thakoor, Shantanu and Tallec, Corentin and Azar, Mohammad Gheshlaghi and Azabou, Mehdi and Dyer, Eva L and Munos, Remi and Veli{\v{c}}kovi{\'c}, Petar and Valko, Michal},
    booktitle={Proceedings of the 10th  International Conference on Learning Representations \normalfont{(ICLR)}},
	year={2022}
}

@inproceedings{sugrl,
	title={Simple Unsupervised Graph Representation Learning},
	author={Mo, Yujie and Peng, Liang and Xu, Jie and Shi, Xiaoshuang and Zhu, Xiaofeng},
	booktitle={Proceedings of the 36th AAAI Conference on Artificial Intelligence \normalfont{(AAAI)}},
	pages={7797-7805},
	year={2022}
}

@inproceedings{ggd,
	title={Rethinking and Scaling up Graph Contrastive Learning: An Extremely Efficient Approach with Group Discrimination},
	author={Zheng, Yizhen and Pan, Shirui and Lee, Vincent and Zheng, Yu and Yu, Philip S},
 booktitle={Proceedings of the 36th Annual Conference on Neural Information Processing Systems \normalfont{(NeurIPS)}},
	pages={10809-10820},
	year={2022}
}

@article{sp-gcl,
	title={Single-Pass Contrastive Learning Can Work for Both Homophilic and Heterophilic Graph},
	author={Haonan Wang and Jieyu Zhang and Qi Zhu and Wei Huang and Kenji Kawaguchi and Xiaokui Xiao},
	journal={Transactions on Machine Learning Research},
	year={2023}
}

@inproceedings{sgcl,
	author = {Sun, Wangbin and Li, Jintang and Chen, Liang and Wu, Bingzhe and Bian, Yatao and Zheng, Zibin},
	title = {Rethinking and Simplifying Bootstrapped Graph Latents},
	year = {2024},
	booktitle = {Proceedings of the 17th ACM International Conference on Web Search and Data Mining \normalfont{(WSDM)}},
	pages = {665–673}
}

@inproceedings{e2neg,
  author       = {Yongqi Huang and
                  Jitao Zhao and
                  Dongxiao He and
                  Di Jin and
                  Yuxiao Huang and
                  Zhen Wang},
  title        = {Does {GCL} Need a Large Number of Negative Samples? Enhancing Graph
                  Contrastive Learning with Effective and Efficient Negative Sampling},
  booktitle = {Proceedings of the 39th AAAI Conference on Artificial Intelligence \normalfont{(AAAI)}},
  pages        = {17511--17518},
  year         = {2025}
}

@inproceedings{gca,
	title={Graph Contrastive Learning with Adaptive Augmentation},
	author={Zhu, Yanqiao and Xu, Yichen and Yu, Feng and Liu, Qiang and Wu, Shu and Wang, Liang},
	booktitle={Proceedings of the ACM Web Conference \normalfont{(WWW)}},
	pages={2069--2080},
	year={2021}
}

@inproceedings{ gcool,
	author={Li, Bolian and Jing, Baoyu and Tong, Hanghang},
	title={Graph Communal Contrastive Learning},
	booktitle = {Proceedings of the ACM Web Conference \normalfont{(WWW)}},
	pages={1203-1213},
	year = {2022}
}

@inproceedings{csgcl,
	title={CSGCL: Community-Strength-Enhanced Graph Contrastive Learning},
	author={Chen, Han and Zhao, Ziwen and Li, Yuhua and Zou, Yixiong and Li, Ruixuan and Zhang, Rui},
	booktitle={Proceedings of the 32nd International Joint Conference on Artificial Intelligence \normalfont{(IJCAI)}},
	pages={2059--2067},
	year={2023}
}

@inproceedings{sgrl,
  title={Exploitation of a Latent Mechanism in Graph Contrastive Learning: Representation Ccattering},
  author={He, Dongxiao and Shan, Lianze and Zhao, Jitao and Zhang, Hengrui and Wang, Zhen and Zhang, Weixiong},
 booktitle={Proceedings of the 38th Annual Conference on Neural Information Processing Systems \normalfont{(NeurIPS)}},
 pages={115351--115376},
  year={2024}
}

@inproceedings{hgrl,
	title={Towards Self-Supervised Learning on Graphs with Heterophily},
	author={Chen, Jingfan and Zhu, Guanghui and Qi, Yifan and Yuan, Chunfeng and Huang, Yihua},
	booktitle={Proceedings of the 31st ACM International Conference on Information and Knowledge Management \normalfont{(CIKM)}},
	pages={201-211},
	year={2022}
}

@inproceedings{dssl,
	title={Decoupled Self-Supervised Learning for Graphs},
	author={Xiao, Teng and Chen, Zhengyu and Guo, Zhimeng and Zhuang, Zeyang and Wang, Suhang},
        booktitle={Proceedings of the 36th Annual Conference on Neural Information Processing Systems \normalfont{(NeurIPS)}},
	pages={620-634},
	year={2022}
}

@inproceedings{greet,
	title={Beyond Smoothing: Unsupervised Graph Representation Learning with Edge Heterophily Discriminating},
	author={Liu, Yixin and Zheng, Yizhen and Zhang, Daokun and Lee, Vincent CS and Pan, Shirui},
	booktitle={Proceedings of the 37th AAAI Conference on Artificial Intelligence \normalfont{(AAAI)}},
	pages={4516-4524},
	year={2023}
}

@inproceedings{hetergcl,
  title={HeterGCL: Graph Contrastive Learning Framework on Heterophilic Graph},
  author={Wang, Chenhao and Liu, Yong and Yang, Yan and Li, Wei},
  booktitle={Proceedings of the 33rd International Joint Conference on Artificial Intelligence \normalfont{(IJCAI)}},
  pages={2397--2405},
  year={2024}
}

@inproceedings{polygcl,
  title={PolyGCL: Graph Cntrastive Learning via Learnable Spectral Polynomial Filters},
  author={Chen, Jingyu and Lei, Runlin and Wei, Zhewei},
    booktitle={Proceedings of the 12th  International Conference on Learning Representations \normalfont{(ICLR)}},
    year={2024}
}

@inproceedings{m3pgcl,
  title={Beyond Homophily: Graph Contrastive Learning with Macro-Micro Message Passing},
  author={Chen, Yiyuan and Guan, D onghai and Yuan, Weiwei and Zang, Tianzi},
	booktitle={Proceedings of the 39th AAAI Conference on Artificial Intelligence \normalfont{(AAAI)}},
  pages={15948--15956},
  year={2025}
}

@article{louvain,
	title={Fast Unfolding of Communities in Large Networks},
	author={Blondel, Vincent D and Guillaume, Jean-Loup and Lambiotte, Renaud and Lefebvre, Etienne},
	journal={Journal of Statistical Mechanics: Theory and Experiment},
	volume={2008},
	number={10},
	pages={P10008},
	year={2008}
}

@article{gcl-survey,
  author={Liu, Yixin and Jin, Ming and Pan, Shirui and Zhou, Chuan and Zheng, Yu and Xia, Feng and Yu, Philip S.},
  journal={IEEE Transactions on Knowledge and Data Engineering}, 
  title={Graph Self-Supervised Learning: A Survey}, 
  year={2023},
  volume={35},
  number={6},
  pages={5879-5900},
  publisher={IEEE}
}

@ARTICLE{gnn-survey,
  author={Wu, Zonghan and Pan, Shirui and Chen, Fengwen and Long, Guodong and Zhang, Chengqi and Yu, Philip S.},
  journal={IEEE Transactions on Neural Networks and Learning Systems}, 
  title={A Comprehensive Survey on Graph Neural Networks}, 
  year={2021},
  volume={32},
  number={1},
  pages={4-24},
  publisher={IEEE}
}

@inproceedings{cluster-gcn,
  title={Cluster-Gcn: An Efficient Algorithm for Training Deep and Large Graph Convolutional Networks},
  author={Chiang, Wei-Lin and Liu, Xuanqing and Si, Si and Li, Yang and Bengio, Samy and Hsieh, Cho-Jui},
  booktitle={Proceedings of the 25th ACM Conference on Knowledge Discovery and Data Mining \normalfont{(SIGKDD)}},
  pages={257--266},
  year={2019}
}

@inproceedings{ci-gcl,
  title={Community-invariant Graph Contrastive Learning},
  author={Tan, Shiyin and Li, Dongyuan and Jiang, Renhe and Zhang, Ying and Okumura, Manabu},
  booktitle = {Proceedings of the 41st International Conference on Machine Learning\normalfont{(ICML)}},
  pages={47579--47606},
  year={2024}
}

@inproceedings{sega,
	title={SEGA: Structural Entropy Guided Anchor View for Graph Contrastive Learning},
	author={Wu, Junran and Chen, Xueyuan and Shi, Bowen and Li, Shangzhe and Xu, Ke},
	booktitle = {Proceedings of the 40th International Conference on Machine Learning \normalfont{(ICML)}},
	pages = {37293-37312},
	year = {2023}
}

@article{mkl-two,
  title={Multiple Kernel Learning Algorithms},
  author={G{\"o}nen, Mehmet and Alpayd{\i}n, Ethem},
  journal={The Journal of Machine Learning Research},
  volume={12},
  pages={2211--2268},
  year={2011}
}

@inproceedings{mkl-federated,
 author = {M. Ghari, Pouya and Shen, Yanning},
 booktitle={Proceedings of the 36th Annual Conference on Neural Information Processing Systems \normalfont{(NeurIPS)}},
pages = {33316--33329},
 title = {Personalized Online Federated Learning with Multiple Kernels},
 year = {2022}
}

@inproceedings{t2-gnn,
	title={T2-GNN: Graph Neural Networks for Graphs with Incomplete Features and Structure via Teacher-Student Distillation},
	author={Huo, Cuiying and Jin, Di and Li, Yawen and He, Dongxiao and Yang, Yu-Bin and Wu, Lingfei},
	booktitle={Proceedings of the 37th AAAI Conference on Artificial Intelligence \normalfont{(AAAI)}},
	pages={4339-4346},
	year={2023}
}

@article{mkl-cluster,
  title={Simplemkkm: Simple Multiple Kernel K-Means},
  author={Liu, Xinwang},
  journal={IEEE Transactions on Pattern Analysis and Machine Intelligence},
  volume={45},
  number={4},
  pages={5174--5186},
  year={2022},
  publisher={IEEE}
}

@book{ais,
        author       = {Angsheng Li},
        title        = {Science of Artificial Intelligence: The mathematical principles of intelligence},
        publisher    = {Science Press},
        year         = {2024}        
}

@article{cora-data,
  title={Collective Classification in Network Data},
  author={Sen, Prithviraj and Namata, Galileo and Bilgic, Mustafa and Getoor, Lise and Galligher, Brian and Eliassi-Rad, Tina},
  journal={AI magazine},
  volume={29},
  number={3},
  pages={93--93},
  year={2008}
}

@article{wiki-data,
  title={Wiki-cs: A Wikipedia-based Benchmark for Graph Neural Networks},
  author={Mernyei, P{\'e}ter and Cangea, C{\u{a}}t{\u{a}}lina},
  journal={arXiv preprint arXiv:2007.02901},
  year={2020}
}

@article{amazon-co-data,
  title={Pitfalls of Graph Neural Network Evaluation},
  author={Shchur, Oleksandr and Mumme, Maximilian and Bojchevski, Aleksandar and G{\"u}nnemann, Stephan},
  journal={arXiv preprint arXiv:1811.05868},
  year={2018}
}

@inproceedings{ratings-data,
  title={A Critical Look at the Evaluation of GNNs under Heterophily: Are We Really Making Progress?},
  author={Platonov, Oleg and Kuznedelev, Denis and Diskin, Michael and Babenko, Artem and Prokhorenkova, Liudmila},
  booktitle={Proceedings of the 12th  International Conference on Learning Representations \normalfont{(ICLR)}},
  year={2023}
}

@inproceedings{cornell-data,
	title={Geom-GCN: Geometric Graph Convolutional Networks},
	author={Pei, Hongbin and Wei, Bingzhe and Chang, Kevin Chen-Chuan and Lei, Yu and Yang, Bo},
        booktitle={Proceedings of the 9th  International Conference on Learning Representations \normalfont{(ICLR)}},
	year={2020}
}

@article{corcodile-data,
  title={Multi-scale Attributed Node Embedding},
  author={Rozemberczki, Benedek and Allen, Carl and Sarkar, Rik},
  journal={Journal of Complex Networks},
  volume={9},
  number={2},
  pages={cnab014},
  year={2021}
}

@inproceedings{ogbn-data,
	author = {Hu, Weihua and Fey, Matthias and Zitnik, Marinka and Dong, Yuxiao and Ren, Hongyu and Liu, Bowen and Catasta, Michele and Leskovec, Jure},
	booktitle={Proceedings of the 34th Annual Conference on Neural Information Processing Systems \normalfont{(NeurIPS)}},
	pages = {22118-22133},
	title = {Open Graph Benchmark: Datasets for Machine Learning on Graphs},
	year = {2020}
}

@inproceedings{relu,
  title={Transformers are Rnns: Fast Autoregressive Transformers with Linear Attention},
  author={Katharopoulos, Angelos and Vyas, Apoorv and Pappas, Nikolaos and Fleuret, Fran{\c{c}}ois},
  booktitle= {Proceedings of the 37th International Conference on Machine Learning \normalfont{(ICML)}},
  pages={5156--5165},
  year={2020},
  organization={PMLR}
}

@inproceedings{elu,
  title={cosFormer: Rethinking Softmax In Attention},
  author={Qin, Zhen and Sun, Weixuan and Deng, Hui and Li, Dongxu and Wei, Yunshen and Lv, Baohong and Yan, Junjie and Kong, Lingpeng and Zhong, Yiran},
    booktitle={Proceedings of the 10th International Conference on Learning Representations \normalfont{(ICLR)}},
    year={2022}
}

@inproceedings{matrix-factor,
  title={Contrastive and Non-contrastive Self-supervised Learning Recover Global and Local Spectral Embedding Methods},
  author={Balestriero, Randall and LeCun, Yann},
  booktitle = {Proceedings of the 36th Annual Conference on Neural Information Processing Systems \normalfont{(NeurIPS)}},
  pages={26671--26685},
  year={2022}
}

\clearpage

\section{A.~Detailed Proofs}
\subsection{A.1~Proof of Proposition 1}
\begin{proposition} \label{proposition:drop-p}
    Let the feature dimension of the community-level feature space $\mathcal{X}_P$ be $d^P$. Then, the expected number of distinct partitioned substructures generated by the Dropout($\cdot$) operation for each partition $P_j$ is:
    \begin{equation}
        \mathbb{E}[|P^s_{j}|s=1,\cdots,d^P|] = d^P\left( 1-(1-p)^{|P_j|}\right)
    \end{equation}
    where $P^s_{j}$ is a substructure of $P_j$ on the feature dimension $s$, and $p$ is the dropout probability.
\end{proposition}

\begin{proof}
    We consider each dimension of community-level features as an aggregation of one-dimensional node features within a partitioned substructure. Let $C^s_j$ denote the indicator random variable for $P_j^s$:
    \begin{equation}
        C^s_j = \begin{cases} 
            0, & \text{if $P^s_{j} = P_{j}$} \\
            1, & \text{otherwise}
        \end{cases}
    \end{equation}
    For the partitioned substructure $P^s_{j}$ to be identical to the original partition $P_{j}$, all nodes within the partition must aggregate their one-dimensional features towards the community centroid. Given that the probability of any node in the partitioned substructure aggregating its features to the community centroid is $1-p$, then in this scenario, we have:
    \begin{equation}
        P(C^s_j) = \begin{cases} 
            (1-p)^{|P_j|}, & C^s_j = 0 \\
            1 - (1-p)^{|P_j|}, & C^s_j = 1
        \end{cases}    
    \end{equation}
    Then, the expected value of $C^s_j$ is 
    \begin{equation}
        \begin{split}
            \mathbb{E}[C^s_j] & = 0 \cdot P(C^s_j=0) + 1 \cdot P(C^s_j=1) \\
                        & = 1 - (1-p)^{|P_j|}
        \end{split}
    \end{equation}
    According to the linearity of expectation, we have:
    \begin{equation}
        \begin{split}
            \mathbb{E}[|P^s_{j}|s=1,\cdots,d^P|] &= \mathbb{E} [\sum\nolimits_k^{d^P} C^s_j] = \sum\nolimits_k^{d^P}{\mathbb{E}[C^s_j]} \\
            & = d^P\left( 1-(1-p)^{|P_j|}\right)
        \end{split}
    \end{equation}
    To this end, we can deduce Proposition 1.
\end{proof}

\subsection{A.2~Proof of Proposition 2}
\begin{proposition} \label{proposition:soomth}
    Let $\Bar{\mathbf{A}}$ be the normalized adjacency matrix constructed from positive node pairs in the contrastive loss $\mathcal{L}_{\mathcal{P}}$ and $y(v)$ denote the label of $v$. Then, the bound of the smoothness between node embeddings is:
    \begin{equation}
    \left\|\mathbf{V} - \Bar{\mathbf{A}}\mathbf{V} \right\|_F \leq \sqrt{2}L \sum_{v_i \in \mathcal{V}}  \frac{1}{1 + \frac{\varepsilon(v_i)\lambda_{v_i}}{(1-\varepsilon(v_i))\gamma_{v_i}}}
    \end{equation}
    where $\lambda_{v_i} =  \mathbb{E}_{v_t \in \mathcal{N}(v_i), y(v_i) = y(v_t)} \kappa_{B}( \{\mathbf{v}_i, \mathbf{c}_j\}, \{\mathbf{v}_t, \mathbf{c}_k \} )$ and $\gamma_{v_i} = \mathbb{E}_{v_t \in \mathcal{N}(v_i), y(v_i)\neq y(v_t)} \kappa_{B}( \{\mathbf{v}_i, \mathbf{c}_j\}, \{\mathbf{v}_t, \mathbf{c}_k \} )$. $L$ is the Lipschitz constant, and $\varepsilon(v_i)$ is the one-hop homophily score of node $v_i$ in $\Bar{\mathbf{A}}$, defined as:
    \begin{equation}
        \varepsilon(v_i) = \frac{1}{|\mathcal{N}(v_i)|} \sum\nolimits_{v_t \in \mathcal{N}(v_i)} \mathds{1}[y(v_i)=y(v_t)] 
    \end{equation}
    where $\mathds{1}[\cdot]$ is the indicator function.
\end{proposition}

\begin{proof}
    Let the positive sample score $a_{v_iv_t}$ in the contrastive loss serve as the weight between node $v_i$ and node $v_t$ in the adjacency matrix $\Bar{\mathbf{A}}$, we have:
    \begin{equation}
        \begin{split}
            \left\|\mathbf{V} - \Bar{\mathbf{A}}\mathbf{V} \right\|_F &\leq \sum_{v_i \in \mathcal{V}} || \mathbf{v}_i - \sum_{v_t \in \mathcal{N}(v_i)} a_{v_iv_t}\mathbf{v}_t||_2 \\
        \end{split}
    \end{equation}
    Since $\Bar{\mathbf{A}}$ is row-normalized, we have:
    \begin{equation}
        \begin{split}
           \left\|\mathbf{V} - \Bar{\mathbf{A}}\mathbf{V} \right\|_F &\leq \sum_{v_i \in \mathcal{V}} || \sum_{v_t \in \mathcal{N}(v_i)} (\mathbf{v}_i -  a_{v_iv _t}\mathbf{v}_t)||_2 \\
           & \leq \sum_{v_i \in \mathcal{V}} \sum_{v_t \in \mathcal{N}(v_i)} a_{v_iv_t} \left\|\mathbf{v}_i - \mathbf{v}_t \right\|_2
        \end{split}
    \end{equation}
    Assume that the linear mapping function from node features to labels is L-Lipschitz continuous, we have:
    \begin{equation}
        \begin{split}
            &\left\|\mathbf{V} - \Bar{\mathbf{A}}\mathbf{V} \right\|_F \leq L\sum_{v_i \in \mathcal{V}} \sum_{v_t \in \mathcal{N}(v_i)} a_{v_iv_t} \left\|y(v_i) - y(v_t) \right\|_2 \\
            & = L\sum_{v_i \in \mathcal{V}} \sum_{v_t \in \mathcal{N}(v_i), y(v_i) \neq y(v_t)} a_{v_iv_t} \left\|y(v_i) - y(v_t) \right\|_2 \\
            & = \sqrt{2}L\sum_{v_i \in \mathcal{V}} \sum_{v_t \in \mathcal{N}(v_i), y(v_i) \neq y(v_t)} a_{v_iv_t} \\
            & = \sqrt{2}L\sum_{v_i \in \mathcal{V}} \frac{\sum_{v_t \in \mathcal{N}(v_i), y(v_i) \neq y(v_t)} \kappa_{B}( \{\mathbf{v}_i, \mathbf{c}_j\}, \{\mathbf{v}_t, \mathbf{c}_k \} ) }{ \sum_{v_t \in \mathcal{N}(v_i)} \kappa_{B}( \{\mathbf{v}_i, \mathbf{c}_j\}, \{\mathbf{v}_t, \mathbf{c}_k \} )} \\
        \end{split}
    \end{equation}
    Let $\lambda_{v_i} =  \mathbb{E}_{v_t \in \mathcal{N}(v_i), y(v_i) = y(v_t)} \kappa_{B}( \{\mathbf{v}_i, \mathbf{c}_j\}, \{\mathbf{v}_t, \mathbf{c}_k \} )$, $\gamma_{v_i} = \mathbb{E}_{v_t \in \mathcal{N}(v_i), y(v_i)\neq y(v_t)} \kappa_{B}( \{\mathbf{v}_i, \mathbf{c}_j\}, \{\mathbf{v}_t, \mathbf{c}_k \} )$, and $\varepsilon(v_i)$ is the one-hop homophily score of node $v_i$. Then, we can obtain:
    \begin{equation}
        \begin{split}
            &\left\|\mathbf{V} - \Bar{\mathbf{A}}\mathbf{V} \right\|_F \\ 
            &\leq \sqrt{2}L\sum_{v_i \in \mathcal{V}} \frac{|\mathcal{N}(v_i)|(1-\varepsilon(v_i))\gamma_{v_i}}{|\mathcal{N}(v_i)|(1-\varepsilon(v_i))\gamma_{v_i} + |\mathcal{N}(v_i)|\varepsilon(v_i)\lambda_{v_i}} \\
            & = \sqrt{2}L \sum_{v_i \in \mathcal{V}}  \frac{1}{1 + \frac{\varepsilon(v_i)\lambda_{v_i}}{(1-\varepsilon(v_i))\gamma_{v_i}}}
        \end{split}
    \end{equation}
    Here, we complete the proof of Proposition~\ref{proposition:soomth}.
\end{proof}

\subsection{A.3~Proof of Proposition 3}
To prove Proposition ~\ref{proposition:appro}, we first introduce a lemma that shows the contrastive loss on the original graph is close to the sum of the coarsened contrastive loss and the low-rank approximation gap~\cite{structcomp}.
\begin{lemma}
    Assuming the original features $\mathbf{X}$ is bounded by $S_\mathbf{X} :=\max_i||\mathbf{X}_i||_2$. Then, the contrastive loss of the $k$-step diffusion graph $\mathbf{A}^k$, denoted as $\mathcal{L}_G$, can be approximated by the coarsened contrastive loss $\mathcal{L}_{S}$ of StructComp.
    \begin{equation}
        |\mathcal{L}_G-\mathcal{L}_{S}| \leq L|| \mathbf{A}^k - \mathbf{PP}^\mathrm{T} ||_F S_\mathbf{X}||\mathbf{W}_P||_2
    \end{equation}
\end{lemma}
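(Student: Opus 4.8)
The plan is to treat $\mathcal{L}_G$ and $\mathcal{L}_S$ as the \emph{same} contrastive functional evaluated on two different embedding matrices, so that their gap reduces to the discrepancy between those embeddings. Write $\mathbf{Z}_G=\sigma(\mathbf{A}^k\mathbf{X}\mathbf{W}_P)$ for the node embeddings produced by the $k$-step diffusion operator, and $\mathbf{Z}_S=\sigma(\mathbf{P}\mathbf{P}^\mathrm{T}\mathbf{X}\mathbf{W}_P)$ for the coarsened embeddings of StructComp, in which the diffusion operator $\mathbf{A}^k$ has been replaced by its low-rank surrogate $\mathbf{P}\mathbf{P}^\mathrm{T}$. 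The first step is then the elementary identity $\mathbf{A}^k\mathbf{X}\mathbf{W}_P-\mathbf{P}\mathbf{P}^\mathrm{T}\mathbf{X}\mathbf{W}_P=(\mathbf{A}^k-\mathbf{P}\mathbf{P}^\mathrm{T})\mathbf{X}\mathbf{W}_P$, which isolates the term $\|\mathbf{A}^k-\mathbf{P}\mathbf{P}^\mathrm{T}\|_F$ that appears in the statement.

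Next I would invoke Lipschitz continuity of the contrastive loss in its embedding argument. Since the feature rows are bounded by $S_\mathbf{X}$ and $\mathbf{W}_P$ is fixed, every pairwise similarity entering the log-softmax lies in a bounded range on which the loss has bounded gradient, and the nonlinearity $\sigma$ is $1$-Lipschitz and so does not affect the constant; hence there is a constant $L$ with $|\mathcal{L}_G-\mathcal{L}_S|\le L\,\|\mathbf{Z}_G-\mathbf{Z}_S\|_F$. Substituting the identity from the first step and applying the submultiplicative inequality $\|\mathbf{M}\mathbf{N}\|_F\le\|\mathbf{M}\|_F\|\mathbf{N}\|_2$ twice gives
\[
  |\mathcal{L}_G-\mathcal{L}_S|\;\le\;L\,\|\mathbf{A}^k-\mathbf{P}\mathbf{P}^\mathrm{T}\|_F\,\|\mathbf{X}\|_2\,\|\mathbf{W}_P\|_2,
\]
after which I would bound $\|\mathbf{X}\|_2$ by the feature bound $S_\mathbf{X}$ to obtain the claimed inequality. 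Since this lemma is quoted from StructComp~\cite{structcomp}, I would cite their derivation for the normalization conventions that make the last replacement tight, rather than reproving it from scratch.

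The main obstacle is the second step: establishing the Lipschitz constant $L$ of the contrastive loss with respect to the embedding matrix in a self-contained way. This amounts to differentiating the $\log$ of the positive-to-total similarity ratio through both the numerator and the denominator, controlling the result via a lower bound on the denominator (available because the kernels are strictly positive) together with the boundedness of the similarities, and verifying that the per-node averaging in $\mathcal{L}_G$ and $\mathcal{L}_S$ keeps $L$ independent of the graph size. Everything downstream is routine matrix-norm manipulation; the only subtlety worth flagging is keeping the Frobenius norm on $\mathbf{A}^k-\mathbf{P}\mathbf{P}^\mathrm{T}$ separate from the spectral norm on $\mathbf{W}_P$ so that the constants agree with the stated bound.
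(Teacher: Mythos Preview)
Your proposal is correct and matches the paper's own treatment: the paper does not prove this lemma but simply quotes it from StructComp~\cite{structcomp}, and your sketch---view both losses as the same Lipschitz functional evaluated at $\sigma(\mathbf{A}^k\mathbf{X}\mathbf{W}_P)$ versus $\sigma(\mathbf{P}\mathbf{P}^\mathrm{T}\mathbf{X}\mathbf{W}_P)$, then factor out $(\mathbf{A}^k-\mathbf{P}\mathbf{P}^\mathrm{T})$ and apply norm submultiplicativity---is exactly the StructComp argument. Your caveat about deferring the replacement of $\|\mathbf{X}\|_2$ by $S_\mathbf{X}$ to StructComp's normalization conventions is well placed, since that step is not tight in general without the per-node averaging baked into the loss.
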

Intuitively, this lemma shows that the community-level kernel can approximate the contrastive loss of the $k$-step diffusion graph $\mathbf{A}^k$.
\begin{proposition} \label{proposition:appro}
    Assuming the original features $\mathbf{X}$ and the mapped features $\phi(\mathbf{X})$ are bounded by $S_\mathbf{X} :=\max_i||\mathbf{X}_i||_2$ and $S_{\phi(\mathbf{V})} :=\max_i||\phi(\mathbf{V}_i)||^2_2$, respectively. Then, the original contrastive loss of the $k$-step diffusion graph $\mathbf{A}^k$, denoted as $\mathcal{L}_G$, can be approximated by the dual-kernel community contrastive loss, $\mathcal{L}_\mathcal{P}^{lc}$, without considering the influence of combination coefficients:
    \begin{equation*}
        |\mathcal{L}_G-\mathcal{L}^{lc}_\mathcal{P}| \leq L|| \mathbf{A}^k - \mathbf{PP}^\mathrm{T} ||_F S_\mathbf{X}||\mathbf{W}_P||_2 + S_{\phi(\mathbf{V})}
    \end{equation*}
\end{proposition}

\begin{proof}
    According to the triangle inequality for absolute values, we have:
    \begin{equation}
        \begin{split}
            |\mathcal{L}_G-\mathcal{L}^{lc}_\mathcal{P}| &= |\mathcal{L}_G - \mathcal{L}_{S} + \mathcal{L}_{S} -\mathcal{L}^{lc}_\mathcal{P}| \\
            & \leq|\mathcal{L}_G-\mathcal{L}_{S}| + |\mathcal{L}_S-\mathcal{L}^{lc}_\mathcal{P}|
        \end{split}
    \end{equation}
    We denote the node-level and community-level positive pairs in the dual-kernel GCCL loss with the linear combination method $\ell_{lc}(v_i, P_j)$ as $\ell^+_{lc}(v_i)$ and $\ell^+_{lc}(P_j)$, respectively. Then, for the term $|\mathcal{L}_S-\mathcal{L}^{lc}_\mathcal{P}|$, we can obtain:
    \begin{equation}
        \begin{split}
            |\mathcal{L}_S-\mathcal{L}^{lc}_\mathcal{P}| &= |\frac{1}{n}\sum_{v_i \in \mathcal{V}}\ell^+_{lc}(P_j) - \frac{1}{n}\sum_{v_i \in \mathcal{V}}\left(\ell^+_{lc}(P_j)+ \ell^+_{lc}(v_i)\right)| \\
            & = |\frac{1}{n}\sum_{v_i \in \mathcal{V}}\left(\ell^+_{lc}(v_i)\right)| \\
            & = |\frac{1}{n}\sum_{v_i \in \mathcal{V}}\sum_{P_k \in \mathcal{N}(P_j)}\sum_{v_t\in P_k} \phi(\mathbf{v}_i)^\mathrm{T}\phi(\mathbf{v}_t)| \\
            & \leq |\frac{1}{n}\sum_{v_i \in \mathcal{V}}\sum_{v_t \in \mathcal{V}} \phi(\mathbf{v}_i)^\mathrm{T}\phi(\mathbf{v}_t) | \\
        \end{split}
    \end{equation}
    Let $\phi(\mathbf{v}_{max})$ be the upper bound of feature map $\phi(\mathbf{v}_{i})$, where $i \in n$, we can obtain:
    \begin{equation}
        \begin{split}
            |\mathcal{L}_S-\mathcal{L}^{lc}_\mathcal{P}| &\leq |\frac{1}{n}\sum_{v_i \in \mathcal{V}}\sum_{v_t \in \mathcal{V}} \phi(\mathbf{v}_i)^\mathrm{T}\phi(\mathbf{v}_t) | \\
            &= \frac{1}{n}\sum_{v_i \in \mathcal{V}}\sum_{v_t \in \mathcal{V}} | \phi(\mathbf{v}_i)^\mathrm{T}\phi(\mathbf{v}_t)| \\
            &\leq \frac{1}{n}\sum_{v_i \in \mathcal{V}}\sum_{v_t \in \mathcal{V}}|\phi(\mathbf{v}_{max})^\mathrm{T}\phi(\mathbf{v}_{max})| \\
            &= \frac{1}{n}\sum_{v_i \in \mathcal{V}}\sum_{v_t \in \mathcal{V}} || \phi(\mathbf{v}_{max}) ||_2^2 \\
            &\stackrel{\text{c}}{=} S_{\phi(\mathbf{V})}
        \end{split}
    \end{equation}
    where $\stackrel{\text{c}}{=}$ denotes that minimizing $\frac{1}{n}\sum_{v_i \in \mathcal{V}}\sum_{v_t \in \mathcal{V}} || \phi(\mathbf{v}_{max}) ||_2^2$ is equivalent to minimizing $S_{\phi(\mathbf{V})}$. 
    
    Combining Eq.~28, Eq.~29 and Eq.~31, we can derive:
    \begin{equation*}
        |\mathcal{L}_G-\mathcal{L}^{lc}_\mathcal{P}| \leq L|| \mathbf{A}^k - \mathbf{PP}^\mathrm{T} ||_F S_\mathbf{X}||\mathbf{W}_P||_2 + S_{\phi(\mathbf{V})}
    \end{equation*}

\end{proof}

\subsection{A.4~Proof of Proposition 4}
\begin{proposition}~\label{proposition:erroe}
    Let $G$ be a graph with $B$ classes and the classes are balanced. Then, there exists a linear function $g(\cdot): \mathcal{X}_G \to \mathbb{R}^B$ such that the error upper bound is
    \begin{equation}
        \mathbb{E}_{v}[|| y(v) - g(\mathbf{v}) ||_2^2] \leq 1+B^2\sum\nolimits_{v} (1+\mathcal{L}_\mathcal{P}(v) - \varepsilon(v)) 
    \end{equation}
\end{proposition}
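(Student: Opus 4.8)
The plan is to prove the bound constructively: I would fix one explicit linear classifier $g$ and control $\mathbb{E}_v\|y(v)-g(\mathbf v)\|_2^2$ for it, since the risk of the risk-minimizing linear map is no larger than that of any particular choice. The classifier I would use is the \emph{neighbor-label smoothing} (mean) classifier: letting $\mathbf Y\in\{0,1\}^{n\times B}$ be the one-hot label matrix and $\bar{\mathbf A}$ the row-normalized positive-pair weight matrix from the smoothness analysis (so that $\ell(v_i,P_j)$ equals the total mass $\bar{\mathbf A}$ places on the intra-/neighboring-community positive nodes of $v_i$), I set $g(\mathbf v_i)=(\bar{\mathbf A}\,\mathbf Y)_i$, i.e.\ the similarity-weighted average of the labels of $v_i$'s positive partners. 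This is natural because the contrastive objective forces $\phi(\mathbf v_i)$ to correlate with its positive partners while homophily forces those partners to share $y(v_i)$. Under the tensor-product kernel this map is linear in $\phi(\mathbf v_i)$ within each community; I would either phrase the statement for $g$ ranging over such kernel-linear maps, or---more cleanly---invoke least-squares optimality to reduce the claim to bounding $\tfrac1n\|\bar{\mathbf A}\,\mathbf Y-\mathbf Y\|_F^2$ plus a contrastive slack.

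First I would decompose the per-node error by the triangle inequality into (i) a \emph{homophily term}, the distance from $y(v_i)$ to the smoothing of $v_i$'s \emph{same-class} positive partners, and (ii) a \emph{consistency term}, the extra mass that $\bar{\mathbf A}$ puts on off-class/negative partners, which is exactly $1-\ell(v_i,P_j)$. For (i): since an $\varepsilon(v_i)$-fraction of $v_i$'s neighbors carry label $y(v_i)$, the correct coordinate is displaced by at most $1-\varepsilon(v_i)$ and the $\ell_1$-mass on the remaining coordinates is $1-\varepsilon(v_i)$, so the squared distance is $O(1-\varepsilon(v_i))$; the passage between $\ell_1$, $\ell_2$ and $\ell_\infty$ over the $B$ label coordinates (together with Jensen when moving from kernel-weighted to uniform neighbor weights) is where powers of $B$ enter. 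For (ii): the single key inequality is $1-x\le-\log x$ for $x\in(0,1]$; applied to $x=\ell(v_i,P_j)$ it yields $1-\ell(v_i,P_j)\le-\log\ell(v_i,P_j)=\mathcal L_\mathcal P(v_i)$, which is precisely how the contrastive loss surfaces in the bound.

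Finally I would sum over $v_i$, use the balanced-class identity $\mathbf Y^\mathrm T\mathbf Y=(n/B)\,\mathbf I$ to control the cross terms produced when expanding the squared norms, collect the $1-\varepsilon(v_i)$ and $\mathcal L_\mathcal P(v_i)$ contributions into $\sum_v\bigl(1+\mathcal L_\mathcal P(v)-\varepsilon(v)\bigr)$, and absorb the leftover $O(1)$ slack---from the norm conversions and from the loose replacement of the average on the left by a sum on the right---into the additive ``$1$''. The main obstacle is the first step: the smoothing classifier is genuinely linear only in the kernel feature space and only piecewise across communities, so making ``there exists a linear $g$'' precise requires either narrowing the hypothesis class or routing the whole argument through least-squares optimality; a secondary, purely bookkeeping difficulty is tracking the exact power of $B$ through the $\ell_1$--$\ell_2$ conversions and the balanced-class normalization so that the stated constant is $B^2$ rather than something larger.
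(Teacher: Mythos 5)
Your first half matches the paper: the paper also inserts the neighbor-label-smoothing term (in the form $\Bar{\mathbf{A}}\mathbf{B}$ with $\mathbf{B}_{v,i}=b_i^{-1}\mathds{1}[y_v=i]$) via the triangle inequality, and bounds $\frac{1}{n}\|\mathbf{Y}-\Bar{\mathbf{A}}\mathbf{B}\|_2^2$ by $\frac{1}{n}\sum_v[1+\frac{B^2}{n}(1-\varepsilon(v))]$ exactly as you sketch, with the balanced-class normalization supplying the $B^2$. The gap is in how $\mathcal{L}_\mathcal{P}$ enters. You propose to identify the ``consistency term'' with $1-\ell(v_i,P_j)$ and then apply $1-x\le-\log x$. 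That identification does not hold: $\Bar{\mathbf{A}}$ is row-normalized over \emph{positive} pairs only, so the off-class mass it assigns is $\frac{(1-\varepsilon(v_i))\gamma_{v_i}}{(1-\varepsilon(v_i))\gamma_{v_i}+\varepsilon(v_i)\lambda_{v_i}}$ (the quantity in Proposition 2), which is a function of the homophily score and the kernel values — it is already absorbed into the $1-\varepsilon(v)$ term and carries no information about the denominator over negatives that makes $\ell(v_i,P_j)$ a contrastive score. The contrastive loss cannot surface from the structure of $\Bar{\mathbf{A}}$ alone; it must enter through the learned features.

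The paper's actual mechanism, which your proposal is missing, is the matrix-factorization characterization of contrastive learning: it writes $\frac{1}{n}\|\Bar{\mathbf{A}}\mathbf{B}-\mathbf{VW}\|_2^2\le\frac{1}{n}\|(\Bar{\mathbf{A}}-\mathbf{VV}^\mathrm{T})\mathbf{B}\|_2^2+\frac{1}{n}\|\mathbf{V}(\mathbf{V}^\mathrm{T}\mathbf{B}-\mathbf{W})\|_2^2$, kills the second term by choosing the explicit linear classifier $\mathbf{W}=\mathbf{V}^\mathrm{T}\mathbf{B}$, and then invokes the cited equivalence between minimizing the (dual-kernel, via Proposition 3) contrastive loss and minimizing $\|\Bar{\mathbf{A}}-\mathbf{VV}^\mathrm{T}\|_2^2$ to replace that residual by $\mathcal{L}_\mathcal{P}$. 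This single step simultaneously resolves the obstacle you flag at the end — the smoothing classifier is not linear in $\mathbf{v}$, but $\mathbf{v}\mapsto\mathbf{v}\,(\mathbf{V}^\mathrm{T}\mathbf{B})$ is, and it approximates $\Bar{\mathbf{A}}\mathbf{B}$ precisely to the extent that the contrastive loss is small — and supplies the $\mathcal{L}_\mathcal{P}(v)$ term. Without it, least-squares optimality over linear maps of $\mathbf{v}$ does not dominate the non-linear smoothing classifier, and your argument cannot close.
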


\begin{proof}
 Let the one-hot label corresponding to $y(v) \in \mathbb{R}^{1\times B}$ be $\mathbf{Y}_v$, and assume there exists a linear mapping matrix $\mathbf{W} \in \mathbb{R}^{d^G\times B}$ that maps node features to labels. For any class $i$, the number of nodes in that class is $b_i = \frac{n}{B}$, since we assume an ideal class-balanced setting. Then we have:
    \begin{equation}
        \begin{split}
            \mathbb{E}_{v}[|| y(v) &- g(\mathbf{v}) ||_2^2] = \frac{1}{n} || \mathbf{Y} - \mathbf{VW}||_2^2 \\
            & = \frac{1}{n} || \mathbf{Y} - \Bar{\mathbf{A}}\mathbf{B}  + \Bar{\mathbf{A}}\mathbf{B} - \mathbf{VW}||_2^2 \\
            & \leq \frac{1}{n} || \mathbf{Y} - \Bar{\mathbf{A}}\mathbf{B} ||_2^2 + \frac{1}{n} || \Bar{\mathbf{A}}\mathbf{B} - \mathbf{VW} ||_2^2
        \end{split}
    \end{equation}
    where $\mathbf{B}_{v,i}=b^{-1}_i\mathds{1}[y_v = i]$. For the first term $\frac{1}{n} || \mathbf{Y} - \Bar{\mathbf{A}}\mathbf{B} ||_2^2$, we can obtain:
    \begin{equation}
        \begin{split}
            \frac{1}{n} || \mathbf{Y} &- \Bar{\mathbf{A}}\mathbf{B} ||_2^2 = \frac{1}{n} || \sum_{v \in \mathcal{V}} \left( \mathbf{Y}_v - \Bar{\mathbf{A}}\mathbf{B}  \right) ||_2^2 \\
            &\leq \frac{1}{n} \sum_{v \in \mathcal{V}} || \mathbf{Y}_v - \Bar{\mathbf{A}}\mathbf{B} ||_2^2 \\
            & = \frac{1}{n} \sum_{v \in \mathcal{V}} \sum_{i=1}^B\left( \mathbf{Y}_{v,i} - (\Bar{\mathbf{A}}\mathbf{B})_{v,i} \right)^2 \\
            & = \frac{1}{n} \sum_{v \in \mathcal{V}} [(1 - \sum_{u \in \mathcal{N}(v)} \frac{1}{b_i} \mathds{1}[y_v = i])^2  \\ 
            & ~~~~~~~~~~~~ + ( \sum_{u \in \mathcal{N}(v)} \sum_{i=1}^B \mathds{1}[y_v \neq i] \frac{1}{b_i} \mathds{1}[y_u = i] ) ^2 ] \\
        \end{split}
    \end{equation}
    By simplifying the above formula, we can obtain:
    \begin{equation}
        \begin{split}
             \frac{1}{n} || \mathbf{Y} &- \Bar{\mathbf{A}}\mathbf{B} ||_2^2  \leq  \frac{1}{n} \sum_{v \in \mathcal{V}} [ 1 + \frac{B^2}{n^2} ( \sum_{u \in \mathcal{N}(v)} \mathds{1}[y_v \neq y_u]  )^2] \\
             &\leq  \frac{1}{n} \sum_{v \in \mathcal{V}} [ 1 + \frac{B^2}{n^2} \sum_{u \in \mathcal{N}(v)} \mathds{1}[y_v \neq y_u]] \\
             &\leq \frac{1}{n} \sum_{v \in \mathcal{V}} [ 1 + \frac{B^2}{n^2} \frac{n}{|\mathcal{N}(v)|}\sum_{u \in \mathcal{N}(v)} \mathds{1}[y_v \neq y_u]] \\
             & \leq \frac{1}{n} \sum_{v \in \mathcal{V}} [ 1 + \frac{B^2}{n} (1-\varepsilon(v))]
        \end{split}
    \end{equation}
    For the second term $\frac{1}{n} || \Bar{\mathbf{A}}\mathbf{B} - \mathbf{VW} ||_2^2$ in Eq.~33, we have:
    \begin{equation}
        \begin{split}
            \frac{1}{n} &|| \Bar{\mathbf{A}}\mathbf{B} - \mathbf{VW} ||_2^2 = \frac{1}{n} || \Bar{\mathbf{A}}\mathbf{B} - \mathbf{VV}^\mathrm{T}\mathbf{B} + \mathbf{VV}^\mathrm{T}\mathbf{B} -  \mathbf{VW} ||_2^2 \\
            &= \frac{1}{n} || (\Bar{\mathbf{A}} - \mathbf{VV}^\mathrm{T})\mathbf{B} + \mathbf{V}(\mathbf{V}^\mathrm{T}\mathbf{B} -  \mathbf{W})  ||_2^2 \\
            & \leq \frac{1}{n} || (\Bar{\mathbf{A}} - \mathbf{VV}^\mathrm{T})\mathbf{B} ||_2^2 + \frac{1}{n} || \mathbf{V}(\mathbf{V}^\mathrm{T}\mathbf{B} -  \mathbf{W})  ||_2^2
        \end{split}
    \end{equation}
    Since the $i$-th row of $\mathbf{B}$ is the average representation of nodes in class $i$, we have:
    \begin{equation}
        \begin{split}
            \frac{1}{n} || \Bar{\mathbf{A}}\mathbf{B} - \mathbf{VW} ||_2^2  &\leq \frac{1}{n} || (\Bar{\mathbf{A}} - \mathbf{VV}^\mathrm{T})\mathbf{B} ||_2^2 \\
            &\leq \frac{1}{n} || (\Bar{\mathbf{A}} - \mathbf{VV}^\mathrm{T})||_2^2 ||\mathbf{B}||_2^2 \\
            &= \frac{1}{n} \frac{B^2}{n^2}|| (\Bar{\mathbf{A}} - \mathbf{VV}^\mathrm{T})||_2^2
        \end{split}
    \end{equation}
    Recent work has shown that finding the global optimal value of the contrastive loss is equivalent to solving a matrix factorization problem, i.e., $\min ||\Bar{\mathbf{A}} - \mathbf{VV}^\mathrm{T}||_2^2$~\cite{matrix-factor}. Therefore, combining Proposition 3, our dual-kernel graph community contrastive loss can approximate the contrastive loss of the original graph, and we have:
    \begin{equation}
        \begin{split}
            \frac{1}{n} || \Bar{\mathbf{A}}\mathbf{B} - \mathbf{VW} ||_2^2  &\leq \frac{1}{n} \frac{B^2}{n^2}|| (\Bar{\mathbf{A}} - \mathbf{VV}^\mathrm{T})||_2^2 \\
            &\stackrel{\text{c}}{=} \frac{1}{n} \frac{B^2}{n^2} \mathcal{L}_\mathcal{P}(v)
        \end{split}
    \end{equation}
    Then, combining Eq.~33, Eq.~35 and Eq.~38, we can derive:
    \begin{equation}
        \begin{split}
            \mathbb{E}_{v}[|| & y(v) - g(\mathbf{v}) ||_2^2] \leq \\
            & \frac{1}{n} \sum_{v \in \mathcal{V}} [ 1 + \frac{B^2}{n} (1-\varepsilon(v))] + \frac{1}{n} \frac{B^2}{n^2} \mathcal{L}_\mathcal{P} \\
            &\leq 1 + \sum_{v \in \mathcal{V}} [B^2(1-\varepsilon(v) + B^2 \mathcal{L}_\mathcal{P}(v)] \\
            &= 1+B^2\sum_{v \in \mathcal{V}} (1+\mathcal{L}_\mathcal{P}(v) - \varepsilon(v)) 
        \end{split}
    \end{equation}
        To this end, we complete the proof of Proposition 4.
\end{proof}

\subsection{A.5~Proof of Proposition 5}
\begin{proposition} \label{proposition:dis}
    Minimizing the distillation loss $\mathcal{L}_D$ is equivalent to maximizing the mutual information between the representation $\mathbf{V}$ and the $K$-hop pattern $Y$:
    \begin{equation}
        \mathcal{L}_D \geq H(V|Y) - H(V) = -I(Y;V)
    \end{equation}
    where $V$ is the random variable corresponding to $\mathbf{V}$.
\end{proposition}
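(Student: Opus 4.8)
The plan is to run the standard variational (Barber--Agakov) lower bound on mutual information, with the decoupled message-passing target $\mathbf{Z}=\tfrac{1}{K}\sum_{k=1}^{K}\widetilde{\mathbf{A}}^{k}\mathbf{X}\mathbf{W}_{G}$ playing the role of the ``teacher'' signal and the distillation head $\mathrm{MLP}(\cdot)$ that of the variational decoder. First I would make the probabilistic model explicit: by the homophily assumption stated above, the teacher signal satisfies $Z\mid Y\sim\mathcal{N}(\mathbf{Z}_{Y},\mathbf{I})$, while the student observes only the node-level representation $\mathbf{V}=\mathbf{X}\mathbf{W}_{G}$. Using this Gaussian law together with the deterministic maps $\mathbf{V}\mapsto\mathbf{Z}$ and $\mathbf{V}\mapsto\mathrm{MLP}(\mathbf{V})$, I would exhibit a variational conditional density $q(\mathbf{v}\mid y)$ whose negative log-likelihood equals the per-node squared distillation error up to a parameter-free constant, so that averaging over $(V,Y)$ gives $\mathbb{E}_{Y,V}\big[-\log q(V\mid Y)\big]=\mathcal{L}_{D}+\mathrm{const}$.

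Second, I would invoke Gibbs' inequality (equivalently, non-negativity of the conditional KL divergence): for the true conditional $p(\cdot\mid y)$ and any $q(\cdot\mid y)$ one has $\mathbb{E}_{V\mid y}[-\log q(V\mid y)]\ge\mathbb{E}_{V\mid y}[-\log p(V\mid y)]$, and taking the expectation over $Y$ yields $\mathbb{E}_{Y,V}[-\log q(V\mid Y)]\ge H(V\mid Y)$. Combining this with the previous identity (and absorbing the parameter-free constant, as is customary for statements of this type) gives $\mathcal{L}_{D}\ge H(V\mid Y)$. Since $H(V)\ge 0$ we may subtract it to obtain $\mathcal{L}_{D}\ge H(V\mid Y)-H(V)$, and by the definition of mutual information the right-hand side equals $-I(Y;V)$; equivalently $-\mathcal{L}_{D}+\mathrm{const}\le I(Y;V)$, so minimising $\mathcal{L}_{D}$ maximises a lower bound on $I(Y;V)$, which is the claim.

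The step I expect to be the real work is the first one: rigorously identifying the MSE objective with a genuine cross-entropy $\mathbb{E}_{Y,V}[-\log q(V\mid Y)]$. This requires pinning down the variational family induced jointly by the assumption $Z\mid Y\sim\mathcal{N}(\mathbf{Z}_{Y},\mathbf{I})$ and the two deterministic maps above, verifying that the normalising constant does not depend on $\mathbf{W}_{G}$ or the MLP parameters so that it can legitimately be dropped, and justifying the transfer from $\mathbf{Z}$ to $\mathbf{V}$ --- for instance via the data-processing inequality, exploiting that $\mathbf{Z}$ is a fixed linear function of $\mathbf{V}$ --- so that the resulting bound genuinely concerns $I(Y;V)$ rather than $I(Y;Z)$. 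Once this identification is in place, the remaining steps are immediate from Gibbs' inequality and the non-negativity of entropy.
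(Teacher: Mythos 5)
Your proposal follows essentially the same route as the paper's proof: both identify the MSE distillation loss (up to a parameter-free constant) with the Gaussian conditional cross-entropy $\mathbb{E}_{Y,V}[-\log q(V\mid Y)]$ induced by $Z\mid Y\sim\mathcal{N}(\mathbf{Z}_Y,\mathbf{I})$, lower-bound it by $H(V\mid Y)$ via non-negativity of the KL divergence (your Gibbs'-inequality step), and then subtract $H(V)\ge 0$ to arrive at $-I(Y;V)$. The data-processing detour you flag as ``the real work'' is not needed: the paper takes $V$ to be the MLP output itself and evaluates the conditional Gaussian density directly at $V$, so the bound concerns $I(Y;V)$ from the outset.
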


\begin{proof}
Let the distilled node representation $\mathbf{V}=\text{MLP}(\mathbf{XW}_G)$ and the local neighborhood representation $\mathbf{Z}=1/K \sum\nolimits_{k=1}^K\widetilde{\mathbf{A}}^k\mathbf{X}\mathbf{W}_G$. Based on the graph homophily assumption, nodes of the same semantic class typically share similar neighborhood representations. Thus, the local neighborhood representation $\mathbf{Z}$ can be viewed as sampled from a standard Gaussian distribution centered at $\mathbf{Z}_Y$, i.e., $Z|Y \sim N(\mathbf{Z}_Y, I)$, where $Y$ denotes the latent semantic class of the $K$-hop patterns and $Z$ is the random variable corresponding to $\mathbf{Z}$. Then, following~\cite{eccv}, we can interpret $\mathcal{L}_D$ as the conditional cross-entropy between $V$ and $Z$, given the pseudo labels $Y$ under the $K$-hop pattern:
\begin{equation}
    \begin{split}
         \mathcal{L}_D &= \frac{1}{n} \sum_{v \in \mathcal{V}} || \mathbf{V}_v - \mathbf{Z}_v ||_2^2 \\
         &\stackrel{\text{c}}{=} H(V;Z|Y)\\
         &=H(V|Y) + \mathcal{D}_{KL}(V||Z|Y) \\
         &\geq H(V|Y) 
    \end{split}
\end{equation}
where $ \mathcal{D}_{KL}(\cdot|\cdot)$ is the KL divergence. The above equality holds because the KL divergence is non-negative. According to the definition of mutual information, we have:
\begin{equation}
    \begin{split}
        \mathcal{L}_D &\geq H(V|Y) \\
        &\geq H(V|Y) - H(V) \\
        &= -I(Y;V)
    \end{split}
\end{equation}
The above equality holds because the entropy $H(\cdot)$ is non-negative. Thus, we complete the proof of Proposition 5.
\end{proof}

\section{B.~Additional Explanations for GCCL}
\subsection{B.1~Two Variants of GCCL Loss}
We consider two kernel functions $\kappa_G$ and $\kappa_P$ defined on the node-level space $\mathcal{X}_G$ and community-level space $\mathcal{X}_P$, respectively. Let the corresponding kernel matrices be
\begin{equation}
    \kappa_G(\mathbf{v}_i, \mathbf{v}_t) = <\phi_G(\mathbf{v}_i), \phi_G(\mathbf{v}_t)>, ~~\mathbf{v}_i,\mathbf{v}_t \in \kappa_G
\end{equation}
and
\begin{equation}
    \kappa_P(\mathbf{c}_J, \mathbf{c}_k) = <\phi_P(\mathbf{c}_j), \phi_P(\mathbf{c}_k)>, ~~\mathbf{c}_j,\mathbf{c}_k \in \kappa_P.
\end{equation}

\subsubsection{Tensor Product Method.} We now consider one variant of the GCCL loss based on the tensor product of kernels:
\begin{equation}
    \begin{split}
        \kappa_B&(\{\mathbf{v}_i, \mathbf{c}_j\}, \{\mathbf{v}_t, \mathbf{c}_k \}) = \kappa_G(\mathbf{v}_i, \mathbf{v}_t) \cdot \kappa_P(\mathbf{c}_j, \mathbf{c}_k) \\
        & = <\phi_G(\mathbf{v}_i), \phi_G(\mathbf{v}_t)> \cdot <\phi_P(\mathbf{c}_j), \phi_P(\mathbf{c}_k)> \\
        & = \phi_G(\mathbf{v}_i)^{\mathrm{T}}\phi_G(\mathbf{v}_t) \cdot \phi_P(\mathbf{c}_j)^{\mathrm{T}}\phi_P(\mathbf{c}_k) \\
        & = \left(\phi_G(\mathbf{v}_i)\otimes\phi_P(\mathbf{c}_j)\right)^\mathrm{T} \left(\phi_G(\mathbf{v}_t)\otimes\phi_P(\mathbf{c}_k) \right)
    \end{split}
\end{equation}
where $\otimes$ represents kronecker product. This formulation indicates that the variant of GCCL Loss based on the tensor product of kernels performs a outer product of the node-level and community-level feature maps, and subsequently uses the resulting product for contrastive loss computation. This method enables full-dimensional interactions across different granularity levels, providing a tight integration of node-level and community-level structural information. Empirical results suggest that this variant is particularly beneficial for node-level tasks on hterophilic graphs.

\subsubsection{Linear Combination Method.} We now consider another variant of the GCCL loss based on the linear combination of kernels:
\begin{equation}
    \begin{split}
        \kappa_B&(\{\mathbf{v}_i, \mathbf{c}_j\}, \{\mathbf{v}_t, \mathbf{c}_k \}) = \alpha\kappa_G(\mathbf{v}_i, \mathbf{v}_t) + \beta\kappa_P(\mathbf{c}_j, \mathbf{c}_k) \\
        & = \alpha<\phi_G(\mathbf{v}_i), \phi_G(\mathbf{v}_t)> + \beta <\phi_P(\mathbf{c}_j), \phi_P(\mathbf{c}_k)> \\
        & = \alpha\phi_G(\mathbf{v}_i)^{\mathrm{T}}\phi_G(\mathbf{v}_t) + \beta \phi_P(\mathbf{c}_j)^{\mathrm{T}}\phi_P(\mathbf{c}_k) \\
        & = \left(\sqrt{\alpha}\phi_G(\mathbf{v}_i)\oplus \sqrt{\beta}\phi_P(\mathbf{c}_j)\right)^\mathrm{T} \\ &~~~~~~~~~~~~~~~~~~~~~~~~\left(\sqrt{\alpha}\phi_G(\mathbf{v}_t)\oplus \sqrt{\beta}\phi_P(\mathbf{c}_k) \right)
    \end{split}
\end{equation}
where $\oplus$ represents the weighted concatenation of the node-level and community-level feature maps. This method preserves the independence of features at different levels, allowing the model to flexibly adjust their relative importance. Experimental results demonstrate that this variant is advantageous for node-level tasks on homophilic graphs, as it enables the model to emphasize node-level features by assigning a larger value to the coefficient $\alpha$.

\subsection{B.2~Model Training}
The overall training process of our method is divided into two stages. The first stage trains the GCL model $f_\Omega$ via the dual-kernel contrastive loss, and the second stage trains the distillation model $f_\Phi$ by minimizing the distance between the local representation and the community representation. The training procedure is provided in Algorithm 1.

Given a graph $G$ with $n$ nodes and $m$ communities, suppose the dimension of the node-level feature space $\mathcal{X}_G$ is $d^G$ and the dimension of the community-level feature space $\mathcal{X}_P$ is $d^P$. Then, the complexity of obtaining the bi-level pair of features is $O(nd(d^P+d^G))$. By leveraging the kernel trick to linearize the node-level contrastive loss, the complexity is reduced from $O(n^2d^G)$ to $O(nd^G)$. The computational complexity of the community-level contrastive loss is $O(m^2d^P)$. Since $m \ll n$,  the dual-kernel contrastive loss has a linear complexity with respect to $n$.

\begin{algorithm}[tb]
\caption{Model Training Procedure}
\label{alg:train}
\textbf{Input}: a graph $G=(\mathbf{A},\mathbf{X})$\\
\textbf{Parameter}: number of communities $m$, type of dual-kernel $s$, training epochs of GCL model $T^g$ and distillation model $T^d$\\
\textbf{Output}: final graph representations $\mathbf{Z}^*$\\
\textbf{Steps}:
\begin{algorithmic}[1] 
\STATE Initialize model parameters $\Omega$ and $\Phi$.
\STATE $\mathcal{P}~\gets$ construct the partition of $G$ by Metis.
\STATE $\mathbf{A}^\mathcal{P} \gets$ construct the community-level graph by $\mathbf{P}^\mathrm{T}\mathbf{AP}$.
\STATE // Stage 1: Training the GCL model $f_\Omega$.
\FOR{$i=1$ to $T^g$}
\STATE $\{\mathbf{v}, \mathbf{c}\} \gets$ generate the bi-level features via Eq.~6.
\IF{$s$ = tensor product}
\STATE $\mathcal{L}_\mathcal{P}~\gets$ calculate the contrastive loss via Eq.~8.
\ELSE
\STATE $\mathcal{L}_\mathcal{P}~\gets$ calculate the contrastive loss via Eq.~9.
\ENDIF
\STATE $\Omega ~\gets$ updates GCL model parameters with $\mathcal{L}_\mathcal{P}$.
\ENDFOR
\STATE // Stage 2: Training the distillation model $f_\Phi$.
\FOR{$i=1$ to $T^d$}
\STATE $\mathbf{V}^\prime ~\gets$ generate distilled graph representations.
\STATE $\mathcal{L}_D ~\gets$ calculate the distillation loss via Eq.~13.
\STATE $\Phi ~\gets$ updates distillation model parameters with $\mathcal{L}_\mathcal{D}$.
\ENDFOR
\STATE $\mathbf{Z}^*~\gets$ generate final graph representations via Eq.~14.
\STATE \textbf{return} $\mathbf{Z}^*$
\end{algorithmic}
\end{algorithm}

\begin{figure*}[t]
\centering
\includegraphics[width=0.99\textwidth]{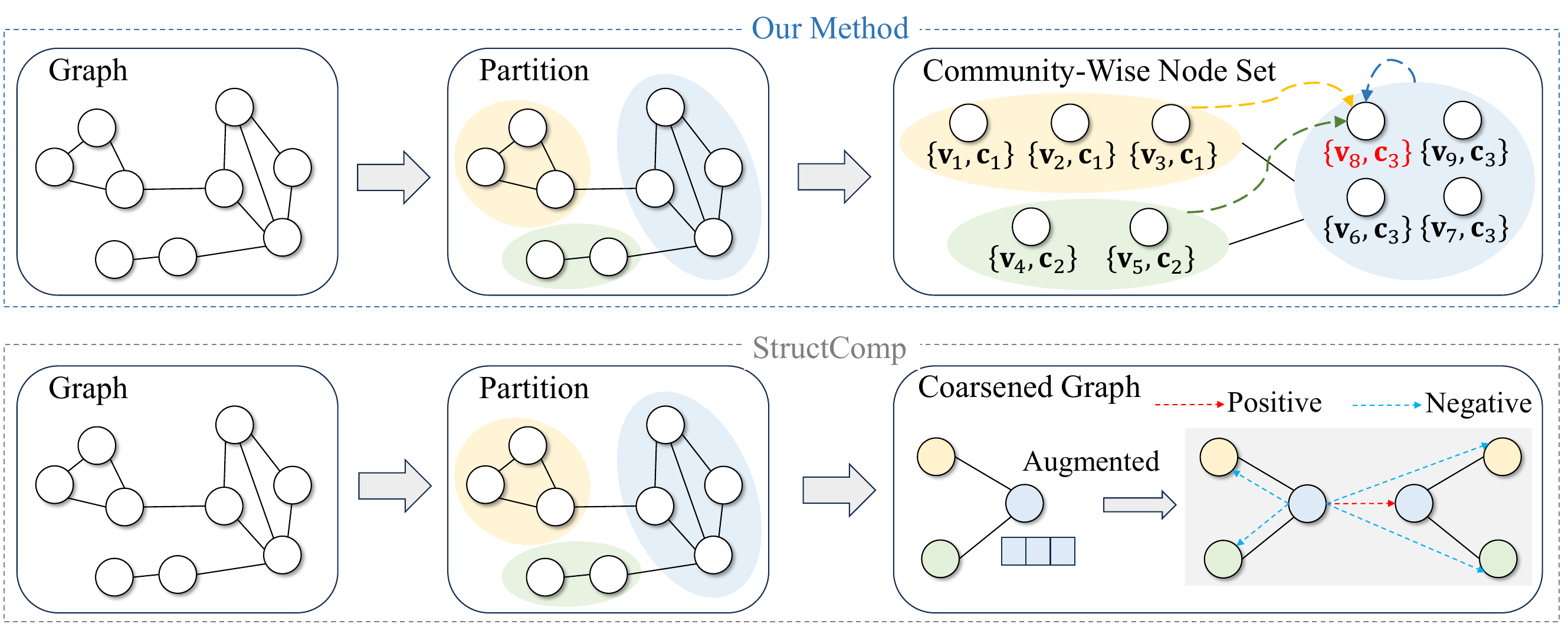} 
\caption{Comparison with StructComp.}
\label{fig-vs-model}
\end{figure*}

\begin{figure*}
\centering
\includegraphics[width=0.99\textwidth]{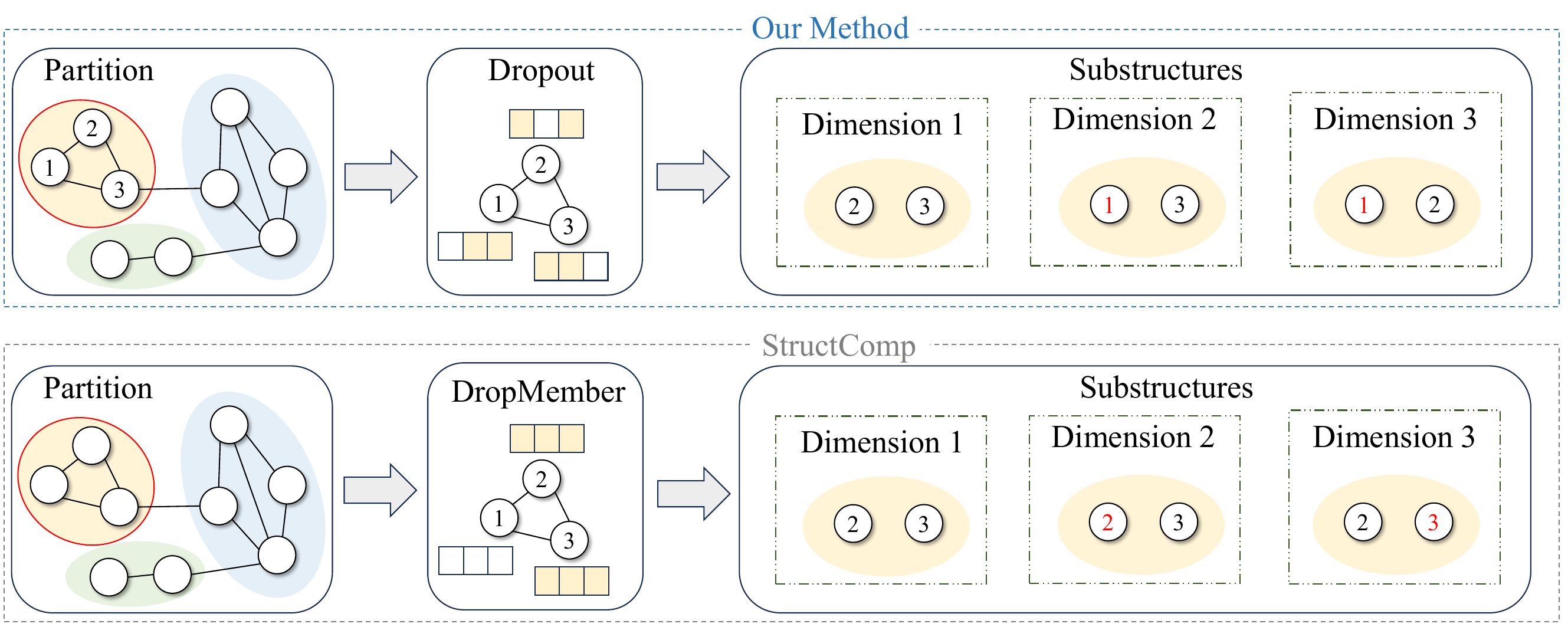} 
\caption{Augmentation strategy comparison with StructComp. Our method can be regarded as constructing a random substructure along each feature dimension, where such diversity of substructures helps enhance generalization ability. In contrast, DropMember directly discards all feature dimensions of certain nodes, which only produces a single substructure.}
\label{fig-vs-aug}
\end{figure*}

\section{C.~Comparison with Related Methods}
\subsection{C.1~Comparison with StructComp}
Our method differs from StructComp~\cite{structcomp} in three key aspects.
\subsubsection{(i) Framework Design.}
StructComp generates a coarsened graph by treating each community as a single node, and then computes the graph contrastive loss on this coarsened graph. In contrast, instead of using coarsening techniques to simplify each community into single node, we envision the graph as a network of node sets interconnected between communities, which allows us to preserve essential node information for model training. Figure~\ref{fig-vs-model} illustrates the primary difference between our method and StructComp in framework design.

\subsubsection{(ii) View Augmentation Strategy.}
StructComp employs a view augmentation method called DropMember, which randomly drops a portion of the nodes within a community to re-aggregate its features. Our method, however, applies a Dropout operation on each node's features, which can be seen as generating a partitioned substructure in each dimension. Figure 5 highlights the main differences between these two augmentation strategies.

\subsubsection{(iii) Contrastive Loss.}
We leverage MKL techniques to compute the graph community contrastive loss. Notably, when we employ a linear combination of kernels with the parameter $\alpha$=0, StructComp becomes a special case of our method, as our approach discards node-level information under this condition, while the community-level kernel function is equivalent to computing the graph contrastive loss on this coarsened graph.

Since StructComp is a special case of our method, our method naturally inherits several of its desirable properties. For example, the contrastive loss on the coarsened graph can be viewed as introducing an additional regularization term to the vanilla InfoNCE loss, which enhances the robustness of the encoder against small perturbations (See Theorem 4.2 in StructComp paper).

\subsection{C.2~Comparison with Other Kernel Methods}
In kernel-based representation learning, $ELU()+1$~\cite{elu} and $ReLU()$~\cite{relu} are two commonly used kernel functions. They perform well in graph classification tasks, where a key feature of such datasets is their extremely small node scale (e.g., only tens of nodes). However, when applied to large-scale graphs with millions of nodes, these kernel functions often lead to training collapse. 

Specifically, in our experiments on Ogbn-Products, replacing the node-level feature map $\phi(v)$ in our graph community contrastive loss with either $ReLU()$ or $ELU()+1$ resulted in NaN training losses. This is because both kernel functions accumulate positive values in the graph representations across the 2 million nodes, causing the denominator of the contrastive loss to exceed the numerical limits of 32-bit floating-point representation, which ultimately leads to NaN values and unstable training. To mitigate this issue, we adopt $Sigmoid()$ as the kernel function of node-level feature map, which maps the graph representations into a bounded range between 0 and 1. This effectively prevents numerical overflow and ensures stable training even on large-scale graphs.

\subsection{C.3~Comparison with N2C-Attn}
Our work is inspired by N2C-Attn~\cite{n2c}. However, there are several key differences between the two approaches. N2C-Attn focuses on supervised graph-level tasks, specifically graph classification based on graph Transformer architectures. It employs Multiple Kernel Learning (MKL) to compute attention scores and outputs community-level representations, which are inherently suited only for graph-level tasks. In contrast, our method is designed for unsupervised node-level tasks, where MKL is used to compute the contrastive loss, while also addressing the inference efficiency bottleneck. Furthermore, N2C-Attn adopts $ReLU()$ and $ELU()+1$ as kernel functions. As discussed in Section C.2, these kernels are unsuitable for large-scale graphs with millions of nodes due to numerical instability and training collapse issues. Therefore, our approach differs fundamentally from N2C-Attn in problem formulation, MKL design, and training objectives.

\begin{table*}[h!]
    \centering
		\begin{tabular}{lccccccc}
			\toprule[1pt]
			Dataset & Nodes & Edges & Classes & Features &Homophily Ratio  & Train / Valid / Test  \\
			\midrule[0.75pt]
			Cora  & 2,708 & 10,556 & 7 & 1,433 & 0.77 & 140 /500 / 1,000 \\
			CiteSeer & 3,327 & 9,104 & 6 & 3,703 & 0.63 & 120 / 500 / 1,000 \\
			Pubmed & 19,717 & 88,648 & 3 & 500 & 0.66 &60 / 500 / 1,000 \\
			Wiki-CS & 11,701 & 431,206 & 10 & 300 &0.57 &1,170 / 1,171 / 9,360 \\
			Amazon-Photo & 7,650 & 238,162 & 8 & 745 & 0.77 & 765 / 765 / 6,120 \\
			Coauthor-CS & 18,333 & 163,788 & 15 & 6,805 &0.76 & 1,833 / 1,834 / 14,666 \\
			Coauthor-Physics	& 34,493 & 495,924 & 5 & 841 & 0.85  & 3,449 / 3,450 / 27,594 \\
                \midrule[0.75pt]
			Cornell  & 183 & 295 & 5 & 1,703 &0.0311 & 87 /59 /37 \\
                Texas  & 183 & 309 & 5 & 1,703 & 0.0013 & 87 /59 / 37 \\
			Wisconsin  & 251 & 499 & 5 & 1,703 &0.0941 & 120 /80 / 51 \\
			Actor  & 7,600 & 29,926 & 5 & 932 &0.0110 & 3,634 /2,432 / 1,520 \\
			Crocodile  & 11,631 & 360,040 & 5 & 2,089  &0.0842 & 6,978 /2,327 / 2,326 \\
			Amazon-Ratings  & 24,492 & 186,100 & 5 & 300 &0.1266  & 12,246 /6,123 / 6,123 \\
			Questions  & 48,921 & 307,080 & 2 & 301 &0.0722 & 24,460 /12,230 / 12,231 \\
                \midrule[0.75pt]
			Ogbn-Arxiv  & 169,343 & 1,166,243 & 40 & 128 &0.416 & 90,941 /29,799 / 48,603 \\
			Ogbn-Products  & 2,449,029 & 61,859,140 & 47 & 100 & 0.459 & 196,615 /39,323 / 2,213,091 \\
			\bottomrule[1pt]
		\end{tabular}
        \caption{The detailed dataset statistics.}
		\label{tab:dataset}
\end{table*}

\begin{table*}[ht]
    \centering
		\begin{tabular}{lcccccccc}
			\toprule[1pt]
			Dataset &Lr &Epoch &Partition Rate &$k-$Hop &$d$ & $\alpha$  &$p$ & $\tau$\\
			\midrule[0.75pt]
                Cora  & 0.005 &15 &0.09 &3 & 1,024 &0.6  & 0.1  & 0.09\\
			CiteSeer & 0.05 &15 &0.07 &3 & 2,048 & 0.5 & 0.15 &0.04\\
			PubMed & 0.0005 & 75 &0.01 & 2 &512 & 0.5 & 0.2 &0.08\\
			    Wiki-CS & 0.0005 & 20 & 0.02 & 3 &1,024 &0.7 &0.15 &0.08\\
			Amazon-Photo & 0.01 & 25 & 0.03 & 5 &1,024 &0.6 &0.15 &0.05\\
			Coauthor-CS & 0.005 & 50 & 0.09 & 1  & 1,024 &0.8 &0.1 &0.08\\
			Coauthor-Physics &0.0005 & 20 & 0.04 & 1 & 2,048 &0.7 &0.3 &0.10\\
                \midrule[0.75pt]
                Cornell  & 0.0005 & 20 & 0.2 & 0 & 8,192 & - &0.1 &0.03\\
                Texas  & 0.0001 & 20 & 0.05 & 0 & 8,192 & - &0.5 &0.04\\
			Wisconsin  & 0.005 & 50 & 0.09 & 0 & 4,096 & - &0.55 &0.06\\
			Actor  & 0.01 & 5 & 0.09 & 0 & 2,048 & - &0.55 &0.03 \\
                Crocodile  & 0.05 & 5 & 0.02 & 0 & 8,192 &- &0.5 &0.09\\
                Amazon-Ratings  & 0.001 & 50 &0.06 &2 & 8,192 & - &0.55 &0.09\\
                Questions  & 0.005 & 10 & 0.007 & 5 & 8,192 & - &0.55 &0.05\\
                \midrule[0.75pt]
			Ogbn-Arxiv  & 0.0005 &25 &0.007 &10 & 800 &0.9 &0.1 &0.03\\
			Ogbn-Products  & 0.001 &25 &0.0001 &10 &128 &0.9 &0.05 & 0.06\\
            		\bottomrule[1pt]
		\end{tabular}
        \caption{Details of the hyper-parameters of our method.}
		\label{tab:parametrs}
\end{table*}
\section{D.~Experimental Study}
\subsection{D.1~Dataset Statistics}
\subsubsection{Datasets.} We evaluate our method on 16 benchmark datasets with different scales and homogeneity levels. including: (i) homophilic graphs: Cora, CiteSeer, PubMed, Wiki-CS, Amazon-Photo, Coauthor-CS, and Coauthor-Physics~\cite{cora-data,wiki-data,amazon-co-data}. (ii) 7 heterophilic graphs: Cornell, Texas, Wisconsin, Actor, Crocodile, Amazon-Ratings, and Questions~\cite{cornell-data,corcodile-data,ratings-data}. (iii) 2 large-scale graphs: Ogbn-Arxiv and Ogbn-Products~\cite{ogbn-data}. The summary statistics of the graphs are shown in Table~\ref{tab:dataset}.

\begin{itemize}
    \item \textbf{Cora}, \textbf{CiteSeer} and \textbf{PubMed} are three citation network datasets where nodes represent papers, edges represent citation relationships between papers, features consist of bag-of-words representations of papers, and labels correspond to the research topics of the papers.
    \item \textbf{Wiki-CS} is a reference network extracted from Wikipedia, where nodes represent articles on computer science, edges represent hyperlinks between articles, features are average bag-of-words embeddings of the corresponding article contexts, and labels are the specific fields of each article.
    \item \textbf{Amazon-Photo} and \textbf{Amazon-Ratings} are two co-purchase networks from Amazon, where nodes represent products, edges represent co-purchase relationships (i.e., two products are frequently bought together), features are bag-of-words representations of product reviews, and labels are product categories.
    \item \textbf{Coauthor-CS} and \textbf{Coauthor-Physics} are two co-author networks extracted from the Microsoft Academic Graph in the KDDCup 2016 challenge, where nodes represent authors, edges represent collaborative relationships, features are bag-of-words representations of paper keywords, and labels are the research fields of the authors.
    \item \textbf{Cornell}, \textbf{Texas} and \textbf{Wisconsin} are three networks of web pages from different computer science departments, where nodes represent web pages, edges represent hyperlinks between web pages, features are bag-of-words representations of pages, and labels are types of web pages.
    \item \textbf{Actor} is an actor co-occurrence network, where nodes represent actors, edges indicate co-occurrence relationships between two actors in the same film, features are extracted from keywords on Wikipedia pages, and labels are the categories of the corresponding actors.
    \item \textbf{Crocodile} is a Wikipedia network, where nodes represent web pages, edges represent hyperlinks between web pages, features are extracted from page keywords, and labels are the daily traffic of the pages.
    \item \textbf{Questions} is based on data from the question-answering website Yandex Q, where nodes represent users, edges indicate that two users answered the same question within a year, features are descriptions of users, and labels are the activity levels of users.
    \item \textbf{Ogbn-Arxiv} and \textbf{Ogbn-Products} are two large-scale datasets. Ogbn-Arxiv is a citation network, where nodes represent papers, edges represent citation relationships between papers, features are extracted from titles and abstracts, and labels correspond to the research topics of the papers. Ogbn-Products is a co-purchase network, where nodes represent products, edges represent co-purchase relationships, features are bag-of-words representations of product reviews, and labels are product categories 
\end{itemize}

\subsubsection{Splitting Strategies.} For the Cora, CiteSeer and PubMed datasets, we randomly select 20 nodes per class for training, 500 nodes for validation, and 1,000 nodes for testing~\cite{dgi}. For the other 4 homophilic datasets, we follow previous works and adopt the public $10\%/10\%/80\%$ training/validation/testing split~\cite{greet}. For the heterophilic and large-scale datasets, we use the standard splits provided by PyTorch Geometric~\cite{sgcl,polygcl}. 

\subsection{D.2~Baselines}
GCL exhibits excellent capability in learning graph representations without task-specific labels, with its core idea being to leverage contrastive loss based on mutual information (MI) maximization to distinguish between positive and negative node pairs, thereby training GNNs.
\begin{itemize}
    \item \textbf{DGI} is a foundational GCL method that maximizes MI between node representations and graph summary.
    \item \textbf{GCA} enhances GCL by incorporating adaptive augmentation based on rich topological and semantic priors.
    \item \textbf{gCooL} utilizes community information to construct positive and negative node pairs required for GCL.
    \item \textbf{CSGCL} adjusts the weight of contrastive samples based on community strength.
    \item \textbf{SP-GCL} exploits the centralized nature of node representation, eliminating the need for graph augmentation.
    \item \textbf{GraphECL} improves inference efficiency based on the coupling model of MLP and GNN.
    \item \textbf{SGRL} enhances the diversity of graph representation through a center-away strategy. 
\end{itemize}

Recent studies improve the scalability by simplifying the steps of view encoding or loss calculation in GCL.
\begin{itemize}
    \item \textbf{BGRL} is a GCL method that learns by predicting alternative augmentations of the input.
    \item \textbf{SUGRL} removes widely used data augmentation and discriminator from previous GCL methods.
    \item \textbf{GGD} adopts a binary cross-entropy loss to distinguish between the two groups of node samples
    \item \textbf{SGCL} utilizes the outputs from two consecutive iterations as positive pairs, eliminating the negative samples.
    \item \textbf{StructComp} performs contrastive learning on the constructed coarsened graph to improve scalability.
    \item \textbf{E2Neg} leverages a small number of representative samples to learn discriminative graph representations. 
\end{itemize}

There are also some methods that explore the potential of GCL on heterophilic graphs.
\begin{itemize}
    \item \textbf{HGRL} learns node representations by preserving original features and capturing informative distant neighbors.
    \item \textbf{L-GCL} samples positive examples from the neighborhood and adopts kernelized loss to reduce training time.
    \item \textbf{DSSL} uses latent variable modeling to decouple different neighborhood contexts without data augmentation.
    \item \textbf{GREET}earns node representations by distinguishing homophilic and heterophilic edges.
    \item \textbf{GraphACL} captures two-hop monophily similarities without relying on homophily assumptions.
    \item \textbf{PolyGCL} leverages polynomial filters to generate low-pass and high-pass spectral augmented views, 
    \item \textbf{M3P-GCL} uses the macro-micro message passing to improve performance on heterophilic graphs.
\end{itemize}

\begin{table*}
        \centering
        \begin{tabular}{lcc|cc|cc|cc|cc|cc}
			\toprule[1pt]
			\multirow{2}{*}{Methods} &\multicolumn{2}{c}{Cora} &\multicolumn{2}{c}{CiteSeer} &\multicolumn{2}{c}{Wiki-CS} &\multicolumn{2}{c}{Amz.Photo} &\multicolumn{2}{c}{Co.CS} &\multicolumn{2}{c}{Co.Physics} \\
			\cmidrule[0.5pt](lr){2-13} 
			&NMI &ARI &NMI &ARI &NMI &ARI &NMI &ARI &NMI &ARI &NMI &ARI\\
            \midrule[0.5pt]
                $K$-Means &8.66 &4.81 &22.45 &20.26 &25.71 &15.02 & 25.77 &14.51 &60.12 & 40.37 & 48.94 & 27.59\\
                gCooL &52.83 &46.15 &40.32 &39.04 &38.24 &\underline{26.88} &56.60 &43.14 &75.32 &62.07 &65.19 &57.81 \\
                CSGCL &43.42 &34.13 & 40.76 & 41.96 &37.17 &12.11 & 58.81 &46.33 &77.12 &63.57 &66.13 &58.29\\
                SP-GCL &28.29 &16.62 &37.67 &36.12 &16.33 &5.81 &28.54 &15.17 &62.37 &44.12 &65.43 &45.97\\
                GraphECL &52.10 &42.39 & 25.29 &22.14 &34.76 &19.44 &49.68 &29.41 &74.37 &61.59 &63.17 &60.22\\
                SGRL &46.94 & 36.84 &43.03 &\underline{43.52} &33.27 & 16.59 &33.65 &17.75 &\underline{77.41} &\underline{65.73} &60.88 &55.70\\
                SUGRL & 56.34 &48.43 &41.97 &42.94 &35.27 &21.86  &\underline{59.62} &\underline{49.77} &76.62 &62.53 &65.69 &60.37 \\
                GREET &\underline{55.18} &\underline{49.71} &\underline{43.13} &42.58 &37.36 &22.21 &52.33 & 37.08 & 75.79 & 62.13 & 66.37 & 63.62\\
                SGCL &54.83 &48.02 &39.66 &39.17  &\underline{39.97} &17.61 &52.76 &38.80 &59.49 &52.31 &\textbf{69.14} & \textbf{68.50}\\
                E2Neg &23.21 &8.63 & 36.09 & 34.69 & 29.65 & 13.69 & 33.75 & 17.43 & 75.23 & 57.52 & 59.15 &44.83\\
                \midrule[0.5pt]
                Ours &\textbf{59.56} &\textbf{51.23} &\textbf{43.81} &\textbf{44.49} &\textbf{40.91} &\textbf{28.93} &\textbf{60.17} &\textbf{50.09} &\textbf{79.66} &\textbf{66.75} &\underline{67.25} &\underline{67.34}\\
			\bottomrule[1pt]
		\end{tabular}
        \caption{Node clustering results measured by NMI $(\%)$ and ARI $(\%)$. }
	\label{tab:cluster}
\end{table*}

\begin{figure}
\centering
\includegraphics[width=0.46\textwidth]{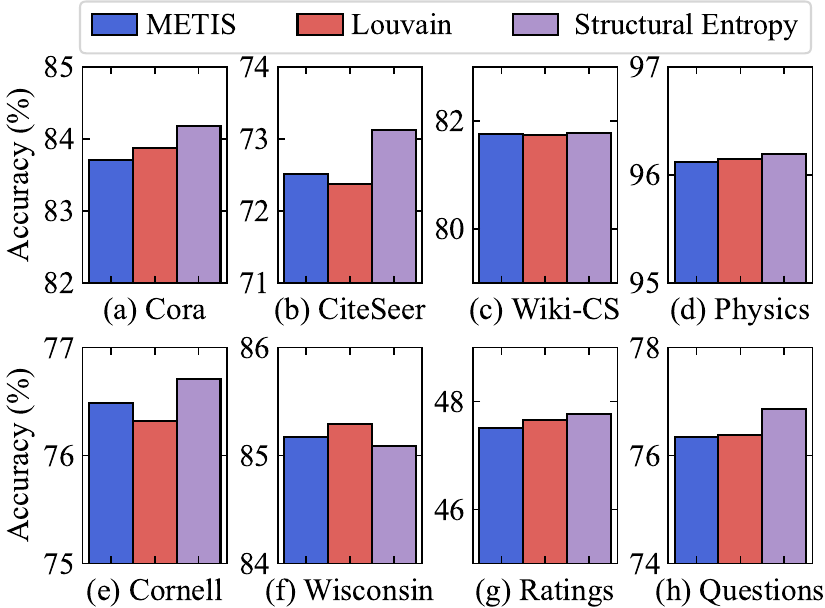} 
\caption{Impacts of partition methods.}
\label{fig-partition}
\end{figure}

\begin{figure*}[t]
\centering
\includegraphics[width=0.99\textwidth]{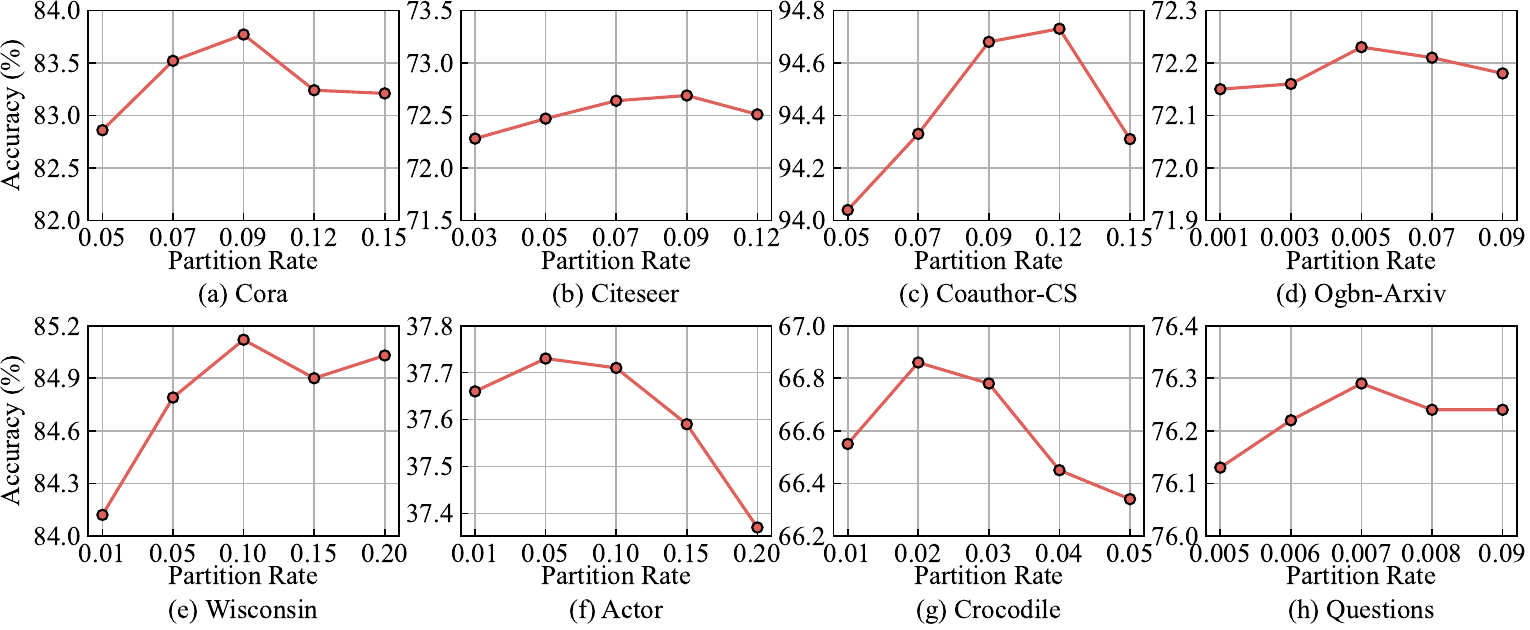} 
\caption{Impact of partition rate.}
\label{fig-rate}
\end{figure*}

\subsection{D.3~Parameter Settings}
We adopt the officially released implementations provided by the authors as baselines and use the hyperparameters specified in their original papers. To ensure fair comparisons, for baselines without reported settings on specific datasets, we perform grid search to carefully tune their hyperparameters. Each dataset is evaluated over 10 different random splits to ensure robustness. All experiments are conducted on a Windows 11 machine equipped with an Intel i9-10900X CPU, 128GB RAM, and an NVIDIA 3090 GPU (24GB memory).

We implement our method in PyTorch with Adam optimizer, with a one-layer linear layer as the encoder and a two-layer MLP as the distillation model. The learning rate $lr$ is selected from ${0.0001, 0.0005, 0.001, 0.002, 0.005, 0.1}$. The number of training epochs is chosen from ${5, 10, 15, 20, 25, 50, 75}$. The partition rate is adjusted based on the number of nodes and classes, and setting the number of communities to tens of times the actual number of classes typically yields optimal results. The order of the diffusion matrix $k$ is selected from ${0, 1, 2, 3, 4, 5}$, and for complex graphs like Ogbn-Arxiv and Ogbn-Products, $k$ is set to 10. The node-level and community-level feature dimensions $d$ are the same, chosen from ${512, 1024, 1500, 2048, 4096, 8192}$. Considering computational cost, we set $d=800$ for Ogbn-Arxiv and $d=128$ for Ogbn-Products. The dropout rate $p$ ranges from 0 to 0.6. The combination coefficient $\alpha$ is selected from the range $[0.5, 1]$, and the temperature coefficient $\tau$ is selected from $[0.01, 0.1]$. The hyperparameters for each dataset are summarized in Table~\ref{tab:parametrs}. More detailed settings can be found in the released code.


\subsection{D.4~Additional Experimental Results}
In this subsection, we provide additional experiments, including node clustering tasks, ablation studies, and analysis of parameter influences.

\subsubsection{Exp-4: Node Clustering.}
We selected several methods that perform well on node classification tasks and compared them in node clustering task, where $K$-Means refers to clustering directly on raw node features. The results are shown in Table~\ref{tab:cluster}.

These results demonstrate that:
(i) Our method outperforms other baselines on most datasets, which can be attributed to its ability to leverage both intra- and inter-community information. (ii) gCooL and CS-GCL also achieve strong performance in the clustering task, further highlighting the importance of community-level information in node representation learning This means that the node representations generated by our method can be extended to other node-level tasks..

\begin{table*}[t!]
    \centering
        \begin{tabular}{lcccccccc}
            \toprule[1pt]
			Variants & Cora & CiteSeer & PubMed & Wiki-CS  & Amz.Photo & Co.CS & Co.Physics \\
			\midrule[0.5pt]
			MLP & 56.11$\pm$0.34 &56.91$\pm$0.42 &71.35$\pm$0.73 &72.02$\pm$0.21 &78.54$\pm$0.05 & 90.42$\pm$0.08 &93.54$\pm$0.05\\
			GCN & 81.60$\pm$1.37 &70.3$\pm$1.15 &79.00$\pm$0.78 &76.87$\pm$0.37 &92.35$\pm$0.25 &93.10$\pm$0.17 &95.54$\pm$0.18\\
			\midrule[0.5pt]
			(w/o Do) & 83.23$\pm$1.37 &\underline{72.23$\pm$1.57} &\underline{82.12$\pm$1.71} & 81.37$\pm$0.25 &93.41$\pm$0.28 &94.28$\pm$0.13 &95.96$\pm$0.12\\			
			(w/o GC) & 74.79$\pm$1.34 & 70.22$\pm$1.56 &75.35$\pm$1.84 & 75.22$\pm$0.49 & 89.33$\pm$0.34 &93.07$\pm$0.21 &95.29$\pm$0.08\\			
			(w/o $\mathcal{L}_{D}$) & \underline{83.58$\pm$1.56} & 71.82$\pm$1.48 & 81.45$\pm$2.36 &\underline{81.42$\pm$0.40} &\underline{93.77$\pm$0.23} &\underline{94.33$\pm$0.14} &\textbf{96.14$\pm$0.16}\\
			\midrule[0.5pt]
			Ours & \textbf{83.77$\pm$1.37} & \textbf{72.68$\pm$1.19} & \textbf{82.56$\pm$1.85} & \textbf{81.75$\pm$0.36}  & \textbf{93.86$\pm$0.15} & \textbf{94.68$\pm$0.14} & \underline{}{96.12$\pm$0.17} \\
            \bottomrule[1pt]
        \toprule[1pt]
            Variants & Cornell & Texas & Wisconsin & Actor  & Crocodile & Amz.Ratings & Questions \\
            \midrule[0.5pt]
            GCN &57.03$\pm$3.30 &60.00$\pm$4.80 &56.47$\pm$6.55 &30.83$\pm$0.77 &\underline{66.72$\pm$1.24} &\textbf{48.70$\pm$0.63} &76.09$\pm$1.27\\
            \midrule[0.5pt]
		(w/o Do) &76.11$\pm$3.07 & \underline{84.59$\pm$4.37} & \underline{84.92$\pm$4.49} &\underline{37.21$\pm$0.78} &66.63$\pm$0.64 &46.95$\pm$0.65 &\underline{76.29$\pm$1.06}\\
            (w/o GC) & - & - & - & - & - & 41.15$\pm$0.61 &70.56$\pm$1.01\\
            (w/o $\mathcal{L}_{D}$) & - & - & - & - & - & 47.02$\pm$0.73 &76.17$\pm$1.08\\
            \midrule[0.5pt]
            Ours & \textbf{76.49$\pm$2.43} & \textbf{85.41$\pm$3.01} & \textbf{85.17$\pm$3.02} & \textbf{37.74$\pm$0.78} & \textbf{67.05$\pm$0.72}   & \underline{47.51$\pm$0.68} & \textbf{76.35$\pm$1.05} \\
        \bottomrule[1pt]                
        \end{tabular}		
    \caption{Ablation study on medium-scale datasets. ‘-’ indicates that we do not use graph convolution operators.}    
    \label{tab:ablation-study-1}
\end{table*}
\subsubsection{Exp-5: Impacts of Graph Partition.}
We analyze the impact of graph partition on performance. First, we we compared several representative partition algorithms, including Louvain~\cite{louvain}, Structural Entropy (SE)~\cite{ais}, and Metis used in our experiments. Then, we investigate the effect of varying the number of communitys. The results are shown in Figures~\ref{fig-partition} and \ref{fig-rate}, respectively.

These results demonstrate that: (i) Our method is compatible with various graph partition algorithms. In general, more advanced algorithms tend to yield better performance (i.e., SE). Considering the complexity of partitioning, we recommend using SE for medium-scale graphs and using the more efficient algorithm Metis for large-scale graphs. (ii) The performance of our method varies with the compression ratio and exhibits a hump-shaped curve. If the number of communitys is too small, excessive compression may degrade performance, while more communitys do not bring better performance. Based on dataset statistics, we find that setting the number of communitys to tens of times the actual number of classes typically yields optimal results.


\subsubsection{Exp-6: Ablation Studies.}
We conducted an ablation study to evaluate the contributions of several key components, as shown in Table~\ref{tab:ablation-study-1}. The specific ablation settings include: (a) Removing the dropout operator (w/o Dropout), (b) Removing the graph convolution operator (w/o GC) and (c) Removing the representation distillation operator (w/o $\mathcal{L}_D$).

These results demonstrate that: (i) All components contribute to the performance of our method. Although the representation distillation module has a relatively minor impact on performance, it is of great value in significantly improving inference efficiency. (ii) Local information is crucial for improving the accuracy of node classification on homophilic graphs, but not always effective on heterophilic graphs. (iii)  Knowledge distillation techniques may have limitations on large-scale graphs. (iv)  Even after removing the GC operator, our method still significantly outperforms a pure MLP, which further proves its effectiveness in capturing high-order structural information.


\begin{figure*}
	\centering
		\includegraphics[width=0.99\textwidth]{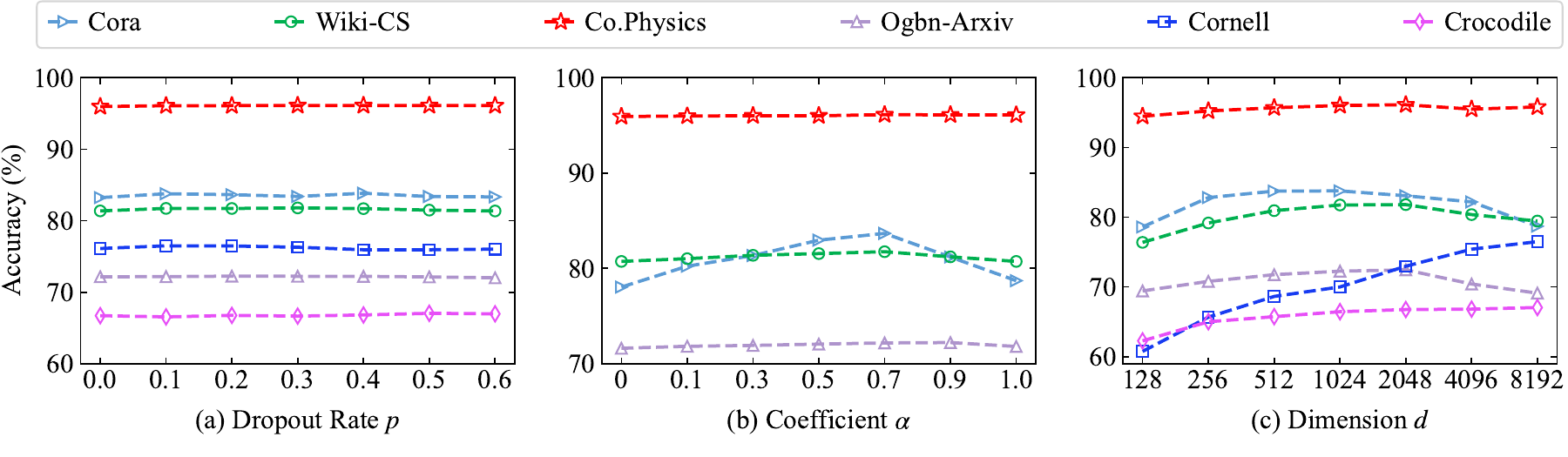}
		\caption{Impacts of dropout rate $p$, combination coefficient $\alpha$ and embedding dimension $d$.}
		\label{fig-hidden}
\end{figure*}

\subsubsection{Exp-7: Sensitivity of Parameters.} 
We investigate the influence of the dropout rate $p$, embedding dimension $d$, and the combination coefficient $\alpha$, as shown in Figure~\ref{fig-hidden}. 

These results demonstrate that: (i) On homophilic graphs, the optimal dropout rate typically falls between 0.1 and 0.3, whereas on terophilic graphs, values of $p$ greater than 0.3 yield better performance. This suggests that promoting substructure diversity is more effective for complex graphs, and such diversity can also reduce the training cycles (as shown in Table 4, our method requires at most 75 training epochs). (ii) A larger embedding dimension $d$ generally improves node classification accuracy, particularly on terophilic graphs. However, on homophilic graphs, extremely large dimensions may lead to overfitting, resulting in a slight performance drop. (iii) On homophilic graphs, the combination coefficient $\alpha$ is typically greater than 0.5, implying that node-level information should be emphasized more heavily for node classification tasks.


\end{document}